\newcommand\DoToC{%
  \startcontents
  \printcontents{}{1}{\hrule}
  \vspace{.5cm}
  \hrule
}
\newcommand*\circled[1]{\tikz[baseline=(char.base)]{
            \node[shape=circle,draw,inner sep=1pt] (char) {\small\textsf{#1}};}}
\newcommand{\indicator}{\mathds{1}}
\newcommand{\bN}{\mathbb{N}}
\newcommand{\bE}{\mathbb{E}}
\newcommand{\bP}{\mathbb{P}}
\newcommand{\bR}{\mathbb{R}}
\newcommand{\cB}{\mathcal{B}}
\newcommand{\cD}{\mathcal{D}}
\newcommand{\cE}{\mathcal{E}}
\newcommand{\cH}{\mathcal{H}}
\newcommand{\cG}{\mathcal{G}}
\newcommand{\cI}{\mathcal{I}}
\newcommand{\cJ}{\mathcal{J}}
\newcommand{\cN}{\mathcal{N}}
\newcommand{\cP}{\mathcal{P}}
\newcommand{\OPT}{A^*}
\newcommand{\DGETC}{\texttt{DG-ETC}\xspace}
\newcommand{\DG}{\texttt{DG}\xspace}
\newcommand{\RGL}{\texttt{RGL}\xspace}
\newcommand{\SamplingSet}{\texttt{DG-Sp}\xspace}
\newcommand{\UpdateExplo}{\texttt{UpdExp}\xspace}
\newcounter{defCounter}
\newtheorem{ddefinition}[defCounter]{Definition}
\newcounter{thCounter}
\newtheorem{ttheorem}[thCounter]{Theorem}
\newtheorem*{ttheorem*}{Theorem}
\newcounter{lemCounter}
\newtheorem{llemma}[lemCounter]{Lemma}
\newcounter{remCounter}
\newtheorem{rremark}[remCounter]{Remark}
\newtheorem*{rremark*}{Remark}
\newtheorem*{rremarks*}{Remarks}
\newtheorem*{eexample*}{Example}
\title{Logarithmic Regret for Unconstrained Submodular Maximization Stochastic Bandit}
\begin{document}

\maketitle

\begin{abstract}%
    We address the \emph{online unconstrained submodular maximization problem} (Online USM), in a setting with \emph{stochastic bandit feedback}. In this framework, a decision-maker receives noisy rewards from a non monotone submodular function taking values in a known bounded interval. This paper proposes \emph{Double-Greedy - Explore-then-Commit} (\DGETC), adapting the Double-Greedy approach from the offline and online full-information settings. \DGETC satisfies a $O(d\log(dT))$ problem-dependent upper bound for the $1/2$-approximate pseudo-regret, as well as a  $O(dT^{2/3}\log(dT)^{1/3})$ problem-free one at the same time, outperforming existing approaches. In particular, we introduce a problem-dependent notion of hardness characterizing the transition between logarithmic and polynomial regime for the upper bounds.
\end{abstract}

\begin{keywords}%
  Submodular maximization; combinatorial optimization; stochastic bandits; logarithmic regret.
\end{keywords}

\section{Introduction}
\label{sec:introduction}

\subsection{Context and problem formulation}

Several real-world settings can be cast as combinatorial optimization problems over a finite set. Without some assumptions on the utility function to be maximized and/or the constraints to be satisfied, such problems cannot be solved in polynomial time. In practice, different types of assumptions and constraints can be introduced to make these problems manageable, even approximately. One can, for example, assume the utility to be linear, but in some cases even this already strong assumption can be helpless to make the problem easier.

This paper focuses on the cases where we maximize a submodular set-function, meaning that it satisfies a “diminishing marginal gains" property. We consider the unconstrained setting, where the whole combinatorial super-set is available and the utlitity may be nonmonotone (if we know that it is monotone, the solution is straightforward, being either the full or the empty set). We also place ourselves in a stochastic (combinatorial) bandit setting, where a \emph{decision-maker / player} chooses different sets in sequential rounds, and receives noisy rewards. In this framework, a classic challenge is to balance exploration and exploitation, but the problem of managing the combinatorial complexity of the action set is stacked over it. In particular, a good strategy should efficiently leverage the underlying structure of the reward -- submodularity in this case -- by monitoring relevant quantities.

\paragraph{Problem formulation and assumptions.} 
We consider a finite set of $d\in\bN^*$ \emph{items} $\cD$. The player has access to \emph{actions} from the whole superset $\cP(\cD)$ and plays for an \emph{horizon} of $T\in\bN^*$ rounds. 
The player receives noisy \emph{rewards} from a \emph{non monotone submodular} set-function $f: \cP(\cD)\rightarrow [0, c]$ with $c>0$. At each round $t\in[T]=\{1, \dots, T\}$, the player chooses an action $A_t\in\cP(\cD)$ and receives 
\begin{align}
    Z_t = f(A_t) + \eta_t\,,
\end{align}
where $\eta_t$ is a random variable. Let $\sigma>0$ (known), we assume that $\eta_t$ is $\sigma^2$-sub-Gaussian conditionally to the past (including the possibly random process generating $A_t$).

The algorithms that we study in this paper all consider items sequentially. For convenience, we identify $\cD$ with $[d]$ and assume an arbitrary ordering, but a player with prior knowledge could try to optimize over permutations.

\paragraph{$1/2$-Approximate pseudo-regret minimization.} The objective of the player is to maximize its cumulative rewards over the $T$ rounds. As it is common in the bandit literature, we look instead at a pseudo-regret, neglecting the contribution of the noise $(\eta_t)_{t\in[T]}$. Besides, rather than looking at the exact pseudo-regret, we minimize an $1/2$-approximation defined as 
\begin{align}
\label{eq:regret}
    R_T = \sum_{t=1}^T\Big[\frac{1}{2}f(A^*) - f(A_t)\Big]\,,
\end{align}
where $A^* \in \arg\max_{A\subseteq\cD}\big\{f(A)\big\}$\,.

Considering approximate regrets is usual in settings where we have access to an \emph{oracle} solving the offline optimization approximately \citep{chen2013combinatorial}. In our framework, the $1/2$ factor comes from the impossibility of solving the offline unconstrained submodular maximization problem (USM), with a competitive ratio better than $1/2$, using a polynomial number of calls \citep{feige2011maximizing}. 

In the following, if not specified, the expressions “pseudo-regret" or just “regret" refer to the \emph{$1/2$-approximate pseudo-regret}.

\subsection{Contributions}

We propose a novel algorithm \emph{Double-Greedy - Explore-then-commit} (\DGETC) for the online unconstrained submodular maximization problem (Online USM), with stochastic bandit feedback (Section~\ref{sec:algorithm}). We introduce a new notion of \emph{hardness} for this problem (Section~\ref{sec:hardness}), and prove that \DGETC satisfies both a logarithmic problem-dependent (hardness-dependent) upper bound for the $1/2$-approximate pseudo-regret, as well as a worst-case $O(dT^{2/3}\log(dT)^{1/3})$ upper bound (Sections~\ref{sec:regret} and~\ref{sec:analysis}). These bounds are satisfied both with high-probability and in expectation (Theorem~\ref{th:DGETC}), and rely on the stationarity of the stochastic setting. Asymptotically, \DGETC allocates a logarithmic, hardness-dependent, number of rounds to the design of a strategy that compensates the randomness errors with per-round negative losses (therefore, with gains). In practice, \DGETC exploits the looseness of the $1/2$-approximation ratio in non-adversarial cases, and we argue that this kind of strategy could also be applied to other settings involving approximations.

\subsection{Related works}
\label{sec:related}

In this section, we mention the closest related works, concerning combinatorial bandits, offline (unconstrained) submodular maximization, as well as its online and bandit versions. Supplementary discussions with other lines of work can be found Appendix~\ref{app:related} (offline and online minimization, constrained maximization, and other online maximization problems).

\paragraph{Combinatorial bandits.} The recent monograph by \cite{lattimore2020bandit} makes an extensive study of bandit problems. We are more particularly interested in settings where the action space is combinatorial and too big to be explored in its entirety. \citet{chen2013combinatorial} (extended by \citet{chen2016combinatorial}) introduces the stochastic semi-bandit framework, and derives results for approximate pseudo-regrets and smooth, monotone aggregation functions. When the aggregation is linear, the leading factors in the regret upper bounds have been refined in several subsequent works \citep{kveton2015tight, degenne2016combinatorial, perrault2020statistical, zhou2024efficientoptimalcovarianceadaptivealgorithms}. A matching adversarial semi-bandit setting has also been explored \citep{ito2021hybrid, neu2014online}. While the player gets one feedback per chosen item in the semi-bandit setting, the full-bandit (or just “bandit") setting with a single feedback per action is more challenging. If the aggregation remains linear, one could see the problem as a linear bandit and use the corresponding methods, as long as the offline problem can be solved \citep{abbasi2011improved, bubeck2012towards}. However, Considering a nonlinear aggregation function with a full-bandit feedback remains challenging without further assumptions \citep{han2021adversarial}.

\paragraph{Unconstrained sumbodular maximization (USM).} Several systems can be modeled with a submodular structure in various fields, including economics, game theory and combinatorial optimization. As it shares properties similar to both convexity and concavity in continuous optimization \citep{Lovász1983}, both viewpoints are of interest. The monograph by \citet{bach2013learning} details various cases where submodular set-functions appear and highlights the parallels between submodular minimization and convex optimization. While minimization can be solved in polynomial time, maximization is more challenging and can in general only be solved approximately \citep{feige2011maximizing}. A $(1-1/e)$-approximation is possible in the cardinally-constrained monotone case \citep{nemhauser1978analysis}, but the unconstrained non monotone setting can only be solved up to a $1/2$ approximation ratio \citep{feige2011maximizing}. In particular, \cite{Buchbinder2012} provides a linear-time approach reaching this ratio, closing the gap between upper and lower bounds. 

\paragraph{Online USM with full-information and bandit feedback.} Following the results from \citet{buchbinder2014submodular}, \cite{roughgarden2018optimalalgorithmonlineunconstrained} studies the particular case of non monotone, unconstrained maximization in the online adversarial full-information setting and provides an algorithm satisfying a $O(d\sqrt{T})$ regret upper bound. \citet{harvey2020improved} manages to gain a $\sqrt{d}$ factor by using tools related to online dual averaging and Blackwell approachability. \cite{fourati2023randomizedgreedylearningnonmonotone} considers a stochastic bandit setting, and proposes an Explore-then-Commit type algorithm satisfying a $O(dT^{2/3}\log(T)^{1/3})$ regret upper bound. However, \cite{niazadeh2021online} claims a similar $O(dT^{2/3})$ in an adversarial bandit setting. As the latter framework seems significantly more difficult, one may reasonably wonder if better guarantees can be satisfied in the stochastic setting. We answer this question positively and propose an algorithm satisfying both logarithmic problem-dependent and $O(d(T\log(dT))^{2/3})$ problem-free upper bounds.

\section{Preliminary}

In this section, we introduce submodularity, and remind the approach of the Double-Greedy (\DG) algorithm \citep{Buchbinder2012} on which our \DGETC is based.

\subsection{Submodularity}

Submodularity is a “diminishing marginal gains" property, it is formally defined as follows.

\begin{restatable}[Submodularity]{ddefinition}{defSubmodularity}
\label{def:sudmodular}
    Let $\cD$ be a finite set and $c>0$. A set-function $f: \cP(\cD) \rightarrow [0, c]$ is said to be (bounded) \emph{submodular} if, equivalently,
    \begin{itemize}[nosep]
        \item For all $A\subseteq B\subseteq\cD$ and $i\in\cD$\,,\qquad  
        $
            \textstyle f(B\cup \{i\}) - f(B) \leq f(A\cup\{i\})-f(A)\,;
        $
        \item For all $(A,\ B)\in\cP(\cD)\times\cP(\cD)$\,,\quad 
        $
            \textstyle f(A\cup B) + f(A\cap B) \leq f(A)+f(B)\,.
        $
    \end{itemize}
    Besides, $f$ is said to be \emph{monotone} if for all $A\subseteq B\subseteq \cD$, $f(A)\leq f(B)$. Otherwise, we say that $f$ is \emph{non monotone}.
\end{restatable}

\subsection{Double-Greedy for USM}

In this section, we outline Double-Greedy Algorithm (\DG, Algorithm~\ref{alg:DG}) from \cite{Buchbinder2012}.

\begin{wrapfigure}{r}{0.5\linewidth}
\vspace{-1cm}
    \begin{minipage}{\linewidth}
        \begin{algorithm}[H]
            \caption{Double-Greedy (\DG from \citealp{Buchbinder2012})}
            \label{alg:DG}
            \begin{algorithmic}[1]
                \STATE \textbf{Inputs: } $\cD$\,.
                \STATE $(X_0,\ Y_0) \leftarrow (\emptyset,\ \cD)$\,.
                \FOR{$i=1,\dots d$}
                    \STATE $\alpha_i\leftarrow f(X_{i-1}\cup\{i\})-f(X_{i-1})$\,.
                    \STATE $\beta_i\leftarrow f(Y_{i-1}\setminus\{i\})-f(Y_{i-1})$\,.
                    \STATE $p_i \leftarrow \frac{\max\{\alpha_{i},0\}}{\max\{\alpha_{i},0\} + \max\{\beta_{i},0\}}$\,.
                    \STATE $K_i \sim \cB(p_i)$\,.
                    \IF{$K_i$}
                        \STATE $(X_i,\ Y_i) \leftarrow (X_{i-1}\cup\{i\},\ Y_{i-1})$\,.
                    \ELSE
                        \STATE $(X_i,\ Y_i) \leftarrow (X_{i-1},\ Y_{i-1}\setminus\{i\})$\,.
                    \ENDIF
                \ENDFOR
                \STATE \textbf{Return: } $X_d\subseteq\cD$\,.
            \end{algorithmic}
        \end{algorithm}
    \end{minipage}
\vspace{-.5cm}
\end{wrapfigure}

When maximizing a nonmonotone submodular function $f$, \DG works in $d$ steps (one per item) and considers the items sequentially, th order being chosen by the user beforehand.

It first initializes a pair of sets $X_0=\emptyset$ and $Y_0=\cD$ respectively as the empty set and the full set, and then modifies them sequentially. 

At each step $i\in[d]$, \DG looks at the “marginal gains" $\alpha_i$ and $\beta_i$ respectively corresponding to adding item $i$ to $X_{i-1}$ or removing it from $Y_{i-1}$ [Line 4-5]. It makes the decision of either adding or removing the item by sampling a Bernoulli random variable $K_i$ with parameter $p_i$, defined from the positive part of $\alpha_i$ and $\beta_i$ [Line 6-7]. After the $d$-th and last step, \DG returns the set $X_d$, which is identical to $Y_d$ by construction [Line 14].

Overall, $\DG$ requires $4d$ calls to $f$ and satisfies the following guarantee.

\begin{ttheorem}[\citealp{Buchbinder2012}, Theorem I.2.]
\label{th:DG}
    Let $\cD$ be a finite set. Algorithm \DG returns a set $S$ such that 
    \begin{equation*}
        \bE\big[f(S)\big]\geq \frac{1}{2}f(\OPT)\,.
    \end{equation*}
\end{ttheorem}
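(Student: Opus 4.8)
The plan is to run an \emph{interpolation} argument between the algorithm's pair $(X_i, Y_i)$ and the optimal set $\OPT$, combined with a telescoping sum. I would introduce the interpolating sets
\[
    O_i := (\OPT \cup X_i) \cap Y_i, \qquad i = 0, \dots, d,
\]
so that $O_0 = (\OPT \cup \emptyset)\cap\cD = \OPT$ and, since $X_d = Y_d = S$, $O_d = S$. The quantity $f(O_{i-1}) - f(O_i)$ measures how much optimality is ``lost'' at step $i$ as the optimum is deformed into the algorithm's output, and summing it telescopes to $f(\OPT) - \bE[f(S)]$. The result then follows by controlling this per-step loss, in expectation, by the per-step \emph{gains} of the algorithm on $X$ and $Y$.

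Before the main estimate I would record a few structural facts. By induction on $i$, each item $j < i$ is either in both $X_{i-1}$ and $Y_{i-1}$ or in neither, while every $j \geq i$ lies in $Y_{i-1}\setminus X_{i-1}$; hence $X_{i-1}\subseteq O_{i-1}\subseteq Y_{i-1}$ and $i \in O_{i-1}\iff i\in\OPT$ (because $i\notin X_{i-1}$ and $i\in Y_{i-1}$). Submodularity applied to $X_{i-1}\subseteq Y_{i-1}\setminus\{i\}$ gives $\alpha_i \geq -\beta_i$, i.e. $\alpha_i + \beta_i \geq 0$, which guarantees $p_i$ is well defined. Conditionally on the history $\cF_{i-1}$, the draw $K_i$ adds $i$ to $X$ with probability $p_i$ (then $O_i = O_{i-1}\cup\{i\}$) and removes $i$ from $Y$ with probability $1-p_i$ (then $O_i = O_{i-1}\setminus\{i\}$), so $O_i$ and $O_{i-1}$ differ by at most the single item $i$.

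The heart of the proof — and the step I expect to be the main obstacle — is the per-step inequality
\[
    \bE\big[f(O_{i-1}) - f(O_i)\,\big|\,\cF_{i-1}\big]
    \;\leq\; \tfrac{1}{2}\,\bE\big[(f(X_i)-f(X_{i-1})) + (f(Y_i)-f(Y_{i-1}))\,\big|\,\cF_{i-1}\big].
\]
I would prove it by a case analysis on the signs of $\alpha_i,\beta_i$ and on whether $i\in\OPT$. Using $X_{i-1}\subseteq O_{i-1}\subseteq Y_{i-1}$, submodularity yields the two comparisons $f(O_{i-1})-f(O_{i-1}\setminus\{i\})\leq\alpha_i$ (relevant when $i\in\OPT$) and $f(O_{i-1})-f(O_{i-1}\cup\{i\})\leq\beta_i$ (relevant when $i\notin\OPT$). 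When $\alpha_i\leq0$ (so $\beta_i\geq0$ and $p_i=0$) or $\beta_i\leq0$ (so $p_i=1$), the inequality is immediate from these comparisons together with nonnegativity. The delicate case is $\alpha_i,\beta_i>0$: here the right-hand side equals $\tfrac{1}{2}(\alpha_i^2+\beta_i^2)/(\alpha_i+\beta_i)$ while the left-hand side is at most $\alpha_i\beta_i/(\alpha_i+\beta_i)$, so the claim reduces to $2\alpha_i\beta_i\leq\alpha_i^2+\beta_i^2$, i.e. $(\alpha_i-\beta_i)^2\geq0$.

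Finally I would take total expectations, sum the per-step inequality over $i=1,\dots,d$, and telescope both sides. The left-hand side collapses to $f(\OPT)-\bE[f(S)]$, and the right-hand side to $\bE[f(S)] - \tfrac{1}{2}\big(f(\emptyset)+f(\cD)\big)$. Dropping the nonnegative term $\tfrac{1}{2}(f(\emptyset)+f(\cD))\geq0$ (using $f\geq0$) gives $f(\OPT)-\bE[f(S)]\leq\bE[f(S)]$, i.e. $\bE[f(S)]\geq\tfrac12 f(\OPT)$, as claimed.
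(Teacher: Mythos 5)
Your proposal is correct and is essentially the standard interpolation-plus-telescoping argument of \citet{Buchbinder2012}, which the paper cites for this theorem rather than reproving, and which it adapts almost verbatim in Section~\ref{sec:itemBreak} (the sets $A^*_{i,t}=(A^*\cup X_{i,t})\cap Y_{i,t}$, the telescopic decomposition in Eq.~\eqref{eq:telescopBreak}, and the submodularity bounds $f(\OPT_{i-1,t})-f(\OPT_{i,t})\leq(1-K_{i,t})\alpha_{i,t}$ or $K_{i,t}\beta_{i,t}$ are exactly your $O_i$, telescoping sum, and per-step comparisons). The only cosmetic gap is the degenerate case $\alpha_i=\beta_i=0$, where $p_i$ is $0/0$; any convention works there since all the relevant quantities vanish.
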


The result being in expectation, one can repeatedly run \DG to obtain an acceptable set with a high enough probability. In particular, we prove the following proposition in Appendix~\ref{app:highProbaDG}.

\begin{restatable}{pproposition}{highProbaDG}
\label{th:highProbaDG}
    Let $\cD$ be a finite set, $\delta>0$ and $T\in\bN^*$ such that $T>2\log(1/\delta)$\,. If $(S_i)_{i\in[T]}$ is the sequence of sets obtained by running independently  $T$ times Algorithm \DG, then
    \begin{align*}
        \max_{i\in[T]}f(S_i) > \Big(\frac{1}{2}-\frac{\log(1/\delta)}{T}\Big)f(\OPT)\,, 
        \qquad \text{w.p.} \quad 1-\delta\,.
    \end{align*}
\end{restatable}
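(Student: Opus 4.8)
The plan is to leverage the expectation bound from Theorem~\ref{th:DG} together with the boundedness of $f$ to control the probability that a single run of \DG produces a poor set, and then amplify this over the $T$ independent runs. First, I would consider a single run returning a set $S$. By Theorem~\ref{th:DG} we have $\bE[f(S)] \geq \frac{1}{2}f(\OPT)$. Since $f$ takes values in $[0,c]$ and in particular $f(S) \leq f(\OPT) \leq c$ (using $f(S) \leq f(\OPT)$ or just the upper bound $c$), the random variable $f(\OPT) - f(S)$ is nonnegative and bounded above by $f(\OPT)$. I would apply Markov's inequality to this nonnegative variable: since $\bE[f(\OPT) - f(S)] \leq \frac{1}{2}f(\OPT)$, for any threshold $\theta \in (0, 1)$ we can bound $\bP\big(f(S) \leq (\frac{1}{2} - \theta)f(\OPT)\big) = \bP\big(f(\OPT) - f(S) \geq (\frac{1}{2} + \theta)f(\OPT)\big)$.

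The cleaner route, which I expect the authors take, is to normalise and work with the ratio $f(S)/f(\OPT) \in [0,1]$. Writing $W = 1 - f(S)/f(\OPT)$, this is a $[0,1]$-valued random variable with $\bE[W] \leq 1/2$. I want to bound $\bP(W \geq 1 - \epsilon)$ for the relevant $\epsilon$; here a sharper-than-Markov estimate is available because $W$ is bounded. Specifically, for a $[0,1]$-valued variable with mean at most $1/2$, one has $\bP(W > 1 - \epsilon) \leq \tfrac{1/2 - (1-\epsilon)\cdot 0}{\epsilon}$ type bounds, but to reach the exponential-in-$T$ form with the $\log(1/\delta)/T$ correction I would instead directly estimate the probability $q := \bP\big(f(S) \leq (\frac{1}{2} - \theta)f(\OPT)\big)$ that a single run fails to clear the threshold $(\frac12 - \theta)f(\OPT)$, where I set $\theta = \log(1/\delta)/T$. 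A bounded-variable argument (reverse Markov on $1 - W$, i.e. on the nonnegative variable $f(S)/f(\OPT)$ exceeding its complement) gives a bound on $q$ strictly below $1$, of the form $q \leq 1 - 2\theta$ or similar.

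Having a per-run failure probability $q < 1$ that is separated from $1$ by a term proportional to $\theta$, I would then use independence of the $T$ runs. The event $\max_{i\in[T]} f(S_i) \leq (\frac12 - \theta)f(\OPT)$ is exactly the event that all $T$ runs fail, which by independence has probability $q^T$. Thus
\begin{align*}
    \bP\Big(\max_{i\in[T]} f(S_i) \leq \big(\tfrac12 - \theta\big)f(\OPT)\Big) = q^T \leq (1 - 2\theta)^T \leq e^{-2\theta T} = e^{-2\log(1/\delta)} = \delta^2 \leq \delta\,,
\end{align*}
where I used $1 - x \leq e^{-x}$ and substituted $\theta = \log(1/\delta)/T$. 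Taking the complement yields the claimed high-probability bound, and the hypothesis $T > 2\log(1/\delta)$ is exactly what guarantees $\theta < 1/2$ so that the threshold $\frac12 - \theta$ stays positive and the bounded-variable estimate for $q$ is meaningful.

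The main obstacle I anticipate is getting the correct single-run failure bound $q \leq 1 - 2\theta$ (or whatever constant makes the final telescoping into $\delta$ work out exactly). Markov's inequality applied naively gives $q \leq \bE[f(\OPT)-f(S)]/((\frac12+\theta)f(\OPT)) \leq (1/2)/(\frac12+\theta)$, which is indeed $< 1$ but does not immediately have the clean $1 - 2\theta$ form; one must either use the tightest reverse-Markov estimate on the bounded variable $f(S)/f(\OPT)$ or carefully track constants. I would verify that $(1/2)/(\frac12+\theta) = 1 - \theta/(\frac12+\theta) \leq 1 - \theta$ (since $\theta \leq 1/2$ makes the denominator at most $1$), which then gives $q^T \leq e^{-\theta T} = e^{-\log(1/\delta)} = \delta$ directly — so the factor-of-two in the exponent is comfortably absorbed, and the condition $T > 2\log(1/\delta)$ provides the needed slack. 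Confirming which of these constant choices the authors use, and ensuring the inequality is stated as strict ($>$) to match the statement, is the only delicate bookkeeping.
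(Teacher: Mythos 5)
Your proposal is correct and follows essentially the same route as the paper: a Markov-type bound on the nonnegative, $f(\OPT)$-bounded variable $f(\OPT)-f(S)$ using $\bE[f(S)]\geq\frac12 f(\OPT)$ to get a per-run failure probability at most $1-\theta$, followed by independence and $1-x\leq e^{-x}$ with $\theta=\log(1/\delta)/T$. The constant-tracking you worry about resolves exactly as in your final paragraph ($q\leq\frac{1/2}{1/2+\theta}\leq 1-\theta$, so $q^T\leq\delta$), which matches the paper's computation.
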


\paragraph{Stochastic bandit setting. } In our setting, using \DG directly is not possible as we do not have access to the marginal gains $\alpha_i$ and $\beta_i$ but only to noisy estimates. To overcome this difficulty, \cite{fourati2023randomizedgreedylearningnonmonotone} propose the \emph{Randomized Greedy Learning} (\RGL) algorithm, an \emph{Explore-then-Commit} strategy satisfying a $O(dT^{2/3}\log(T)^{1/3})$ expected regret upper bound. Similarly to \DG, \RGL~works in $d$ steps, one per item, each lasting $T^{2/3}\log(T)^{1/3}$ rounds. During the $i$-th step, \RGL~estimates the coefficients $\alpha_i$ and $\beta_i$\,, chooses a set $X_i$ (and $Y_i$) and move on to the next item. After $dT^{2/3}\log(T)^{1/3}$ exploration rounds, \RGL~commits to the last chosen set $X_d$\,.

However, we argue that \RGL~explores too much, and that logarithmic, problem-dependent regret upper bounds can be obtained both in expectation and with high-probability, 

\section{Full-bandit feedback algorithm: \emph{Double-Greedy - Explore-then-Commit} (\DGETC)}
\label{sec:algorithm}

In this section, we propose \emph{Double-Greedy - Explore-then-Commit} (\DGETC), a novel algorithm for unconstrained submodular maximization (USM) with stochastic full bandit feedback. \DGETC builds on insights from \cite{Buchbinder2012}, \cite{roughgarden2018optimalalgorithmonlineunconstrained} and \cite{harvey2020improved}. We present  the theoretical guarantees of \DGETC in Section~\ref{sec:theory}, which outperform existing approaches for this setting.

In the following, the word \textit{round} refers to a single increment of time $t$, the word \textit{step} refers to the per-item exploration steps (containing several rounds) and the word \textit{phase} refers to the exploration/exploitation phases (the exploration phase containing one step per item).

\subsection{Algorithms presentation}

\DGETC~is presented in Algorithm~\ref{alg:DGETC}, and is built on two subroutines: \SamplingSet~(Algorithm~\ref{alg:SamplingSet}) to sample sets, and \UpdateExplo~(Algorithm~\ref{alg:UpdateExplo}) to update exploration parameters.

\paragraph{\emph{Double-Greedy - Explore-then-Commit} (\DGETC, Algorithm~\ref{alg:DGETC}).} 
Algorithm \DGETC is an algorithm implementing an \emph{Explore-then-Commit} type strategy. It takes as inputs the set of items $\cD$, the range of $f$ $c>0$ , the sub-Gaussian parameter of the noise $\sigma>0$, as well as the horizon $T\in\bN^*$ and a confidence level $\delta\in(0,1)$\,.  
It first performs $d$ exploration steps (one per item in $\cD$) [Lines 12 to 26], each lasting at most $4\tau_{\max}$ rounds where 
\begin{align}
\label{eq:tauMax}
    \textstyle \tau_{\max}= T^{2/3}\log(dT)^{1/3}\,.
\end{align}
Contrarily to \RGL ~\citep{fourati2023randomizedgreedylearningnonmonotone}, the duration of each exploration step is problem-adaptive, and can be considerably smaller than the aforementioned worst case (See Section~\ref{sec:explorationBound}). It then spends the rest of the rounds (at least $T^{1/3}\log(dT)^{2/3}$ ones) exploiting the collected information [Lines 27 to 32]. During this phase, it does not play a fixed set, but repeatedly samples random sets based on $d$ Bernoulli random variables with parameters $(p_j)_{j\in[d]}$ determined during the exploration phase.

\begin{algorithm}
    \caption{Double-Greedy - Explore-then-Commit (\DGETC)}
    \begin{algorithmic}[1]
    \label{alg:DGETC}
    \begin{multicols}{2}
        \STATE \textbf{Inputs:} $\cD,\ c>0,\ \sigma>0,\ \delta>0,\ T\in\bN^*$\,.
        \STATE \texttt{/* Instantiating */}
        \STATE $d\leftarrow |\cD|$\,.
        \STATE Instantiate $g_{T, \delta}$ and $\tau_{\max }$ with \eqref{eq:gTdelta} and \eqref{eq:tauMax}.
        \STATE Instantiate \UpdateExplo with $g_{T, \delta}$ and $\tau_{\max }$\,.
        \STATE \texttt{/* Initialisation */}
        \STATE $(t,\ i) \leftarrow (1,\ 1)$\,.
        \STATE $(\hat \alpha_{j},\ \hat \beta_{j})_{j\in[d]} \leftarrow 0$\,.
        \STATE $(p_{j})_{j\in[d]}\leftarrow1/2$\,.
        \STATE $(\tau_j)_{j\in[d]}\leftarrow0$\,.
        \STATE \texttt{/* Exploration phase */}
        \WHILE{$i\leq d$}
            \STATE \texttt{/* 4 rounds exploration */}
            \STATE $(X_{i-1},\ Y_{i-1}) \leftarrow\SamplingSet\big(\cD,\ (p_{j})_{j},\ i\big)$\,.
            \STATE \underline{Play}: 
            \STATE \hspace{0.2cm}$A_t \leftarrow X_{i-1},\ A_{t+1} \leftarrow X_{i-1} \cup \{i\}$\,,
            \STATE \hspace{0.2cm}$A_{t+2} \leftarrow Y_{i-1},\ A_{t+3} \leftarrow Y_{i-1} \setminus \{i\}$\,.
            \vspace{.4cm}
            \STATE \underline{Receive}:
            \STATE \hspace{0.2cm}$Z_t\,,\  Z_{t+1}\,,\ Z_{t+1}\,,\ Z_{t+3}$\,.
            \STATE \underline{Update}:
            \STATE \hspace{0.2cm}$\hat \alpha_{i} \leftarrow \big(\tau_{i} \hat \alpha_{i} + (Z_{t+1}-Z_t)\big)/(\tau_{i}+1)$\,,
            \STATE \hspace{0.1cm} $\hat \beta_{i} \leftarrow \big(\tau_{i}\hat \beta_{i} +(Z_{t+3}-Z_{t+2})\big)/(\tau_{i}+1)$\,,
            \STATE \hspace{0.2cm}$\tau_{i}\leftarrow \tau_{i}\ +1$\,,
            \STATE \hspace{0.2cm}$(p_{i},\ i) \leftarrow \UpdateExplo\big(i,\ (\hat \alpha_{i},\ \hat \beta_{i}),\ \tau_{i}\big)$\,,
            \STATE \hspace{.2cm}$t\leftarrow t+4$\,.
        \ENDWHILE
        \STATE \texttt{/* Exploitation phase */}
        \WHILE{$t\leq T$}
            \STATE $(X_d,\ Y_d) \leftarrow \SamplingSet\big(\cD,\ (p_j)_{j},\ i\big)$\,.
            \STATE \underline{Play}: $A_t\leftarrow X_d$\,.
            \STATE \underline{Update}: $t\leftarrow t+1$\,.
        \ENDWHILE
    \end{multicols}
    \end{algorithmic}
\end{algorithm}

\begin{wrapfigure}{r}{0.5\linewidth}
    \centering
    \scalebox{.5}{
\begin{tikzpicture}
\def\squareSize{0.4cm}  
\def\barLength{15}      
\def\rowSpacing{1cm}    
\def\rowHeight{1cm}     

\node at (7 * \squareSize, 0.9cm) {\textcolor{blue!80}{$X_{j}$}}; 
\node at (7 * \squareSize + \rowSpacing + \barLength * \squareSize, 0.9cm) {\textcolor{red!80}{$Y_{j}$}}; 

        \def\leftColor{blue!20} 
        \def\rightColor{red!80} 

\foreach \i in {0,1,2,3,4} {
    \node at (-1cm, -\i * \rowHeight + 0.2cm) {$j=\i$};
}
    \node at (-1cm, -6 * \rowHeight + 0.2cm) {$j=d$};

\foreach \i in {0,1,2,3,4, 6} {
    \foreach \j in {0,1,...,14} {
        \draw[fill=\leftColor] (\j * \squareSize, -\i * \rowHeight) rectangle ++(\squareSize, \squareSize);
    }
    
    \foreach \j in {0,1,...,14} {
        \draw[fill=\rightColor] (\j * \squareSize + \rowSpacing + \barLength * \squareSize, -\i * \rowHeight) rectangle ++(\squareSize, \squareSize);
    }
\foreach \i in {1,2,3,4,6} {
     \draw[fill=blue!80] (0 * \squareSize, -\i * \rowHeight) rectangle ++(\squareSize, \squareSize);
}
\foreach \i in {2,3,4,6} {
     \draw[fill=red!20] (1 * \squareSize + \rowSpacing + \barLength * \squareSize, -\i * \rowHeight) rectangle ++(\squareSize, \squareSize);
}
\foreach \i in {3,4,6} {
     \draw[fill=blue!80] (2 * \squareSize, -\i * \rowHeight) rectangle ++(\squareSize, \squareSize);
}
\foreach \i in {4,6} {
     \draw[fill=red!20] (3 * \squareSize + \rowSpacing + \barLength * \squareSize, -\i * \rowHeight) rectangle ++(\squareSize, \squareSize);
}
\foreach \j in {4, 5, 7, 8, 9, 11, 14} {
        \draw[fill=blue!80] (\j * \squareSize, -6 * \rowHeight) rectangle ++(\squareSize, \squareSize);
    }
\foreach \j in {6, 10, 12, 13} {
        \draw[fill=red!20] (\j * \squareSize + \rowSpacing + \barLength * \squareSize, -6 * \rowHeight) rectangle ++(\squareSize, \squareSize);
    }
}
\end{tikzpicture}
}
    \caption{Example of sampling from \SamplingSet, for $i=d+1$ and $(K_{j})_{j\in[d]}=(1, 0, 1, 0, \dots 1)$\,.}
    \label{fig:sampleBox}
\end{wrapfigure}

\paragraph{\emph{Double-Greedy - Sampling} (\SamplingSet, Algorithm~\ref{alg:SamplingSet}).} Both exploration and exploitation phases rely on the \SamplingSet~subroutine [Lines 14 and 29 in Algorithm~\ref{alg:DGETC}], which is a variation of \DG from \cite{Buchbinder2012} (Algorithm~\ref{alg:DG}). \SamplingSet~relies on the parameters $(p_j)_{j\in[d]}$ provided by the meta-algorithm \DGETC, which also provides a step $i\in\{1, \dots, d, d+1\}$ before which \SamplingSet~should stop. Like \DG, it begins by initializing two sets $X_0$ and $Y_0$ as the empty and the full sets. Then it iterates over the parameters $(p_j)_{j\in[d]}$ and proceeds to either add (to $X_{j-1}$) or remove (from $Y_{j-1}$) item $j$ in order to create $(X_j, Y_j)_{j<i}$\,, by sampling Bernoulli random variables. At the end, \SamplingSet~returns $(X_{i-1},\ Y_{i-1})$ and \DGETC~then decides to either collect information when $i\leq d$ or exploit when $i=d+1$\,. An example of sampling from \SamplingSet~is illustrated in Figure \ref{fig:sampleBox}.

\paragraph{Exploration update for \DGETC (\UpdateExplo, Algorithm~\ref{alg:UpdateExplo}).} During the exploration, \DGETC makes calls to Subroutine \UpdateExplo~[Line 24 in Algorithm~\ref{alg:DGETC}]. The latter takes as inputs the index of the current step $i$\,, estimates of the marginal gains $(\alpha,\ \beta)$ and the current values of $\tau$ for item $i$\,. The objective of \UpdateExplo~is to check if we can determine an adequate Bernoulli parameter $p$ for item $i$ and/or if the exploration has lasted too long (if $\tau\geq\tau_{\max }$). In both those cases, \UpdateExplo~returns an adequate parameter $p$ and index $i+1$ to tell \DGETC to switch to the next item. Otherwise, $p$ stays the default $1/2$ and \UpdateExplo~returns current index $i$\,.

\begin{center}
    \begin{minipage}{0.46\linewidth}
    \begin{algorithm}[H]
        \caption{{Double-Greedy - Sampling}\hfill\\
        (\SamplingSet)}
        \begin{algorithmic}[1]
         \label{alg:SamplingSet}
            \STATE \textbf{Inputs:} $\cD,\ (p_j)\in[0, 1]^d,\ i\in[d+1]$.
            \STATE $(X_{0},\ Y_{0}) \leftarrow (\emptyset,\ \cD)$.
            \FOR{$j=1, \dots, (i-1)$}
                \STATE $K_{j} \sim \cB(p_{j})$.
                \IF{$K_{j}$}
                    \STATE $(X_j,\ Y_j)\leftarrow (X_{j-1}\cup\{j\},\ Y_{j-1})$.
                \ELSE
                    \STATE $(X_j,\ Y_j)\leftarrow (X_{j-1},\ Y_{j-1}\setminus\{j\})$.
                \ENDIF 
            \ENDFOR
            \STATE \textbf{Return:} $(X_{i-1},\ Y_{i-1})$.
        \end{algorithmic}
    \end{algorithm}
\end{minipage}
\hfill
\begin{minipage}{0.52\linewidth}
    \begin{algorithm}[H]
    \caption{Exploration update (\UpdateExplo)}
    \begin{algorithmic}[1]
     \label{alg:UpdateExplo}
        \STATE \textbf{Inputs:} $i\in[d],\ (\alpha,\ \beta)\in[-c, c]^2,\ \tau\in\bN^*$.
        \STATE $\Lambda\leftarrow\{
            x\in[0, 1]\text{  s.t.  }\ell(\alpha,\ \beta,\ x) + \frac{g_{T, \delta}}{\sqrt{\tau}}\leq 0\}$.
        \STATE $p\leftarrow1/2$.
        \IF{$\Lambda \neq \emptyset$}
            \STATE $p\leftarrow \arg\min_{x\in\Lambda}\ell(\alpha, \beta, x)$.
            \STATE $i\leftarrow i + 1$.
            \ELSE
            \IF{$\tau \geq \tau_{\max }$}
                \STATE $p\leftarrow \frac{\alpha_{+}}{\alpha_{+} + \beta_{+}}$ where $(\cdot)_+ = \max\{\cdot,0\}$.
                \STATE $i\leftarrow i+1$.
            \ENDIF
        \ENDIF
        \STATE \textbf{Return: }$(p,\ i)$.
    \end{algorithmic}
    \end{algorithm}    
\end{minipage}
\end{center}

\subsection{Exploring just enough for zero exploitation regret: the key idea}

In \DGETC, the number of rounds devoted to the exploration for each item is adaptive, and is controlled by Subroutine \UpdateExplo. Given estimated marginal gains $(\hat \alpha_i,\ \hat{\beta}_i)$ and an exploration time $\tau$\,, \UpdateExplo~checks if it is possible to counterbalance the (high-probability) errors coming from the different sources of uncertainties.

On the one hand, the per-round exploitation regret induced by all sources of uncertainty (estimations errors, random sampling, noise fluctuations) for item $i$\,, is bounded with high-probability (Proposition~\ref{prop:concentrationExploration} in our analysis) by $\frac{g_{T, \delta}}{\sqrt{\tau_i}}$ where
\begin{align}
    \label{eq:gTdelta}\textstyle
    g_{T, \delta} = \sqrt{2(2\sigma^2+c^2)}\sqrt{2\log(dT)+\log(1/\delta)}\Bigg(1+2\sqrt{\frac{\log(dT)}{T}}+\frac{9c}{\sqrt{2\sigma^2+c^2}}\Big(\frac{\log(dT)}{T}\Big)^{1/3}\Bigg)\,.
\end{align}

On the other hand, the decision to either add or remove item $i$ with probability $p_i$ [Line 30 in \DGETC (Alg.~\ref{alg:DGETC}) and Lines 4-9 in \SamplingSet (Alg.~\ref{alg:SamplingSet})] induces an average loss\footnote{Average with respect to the sampling variable $K_i$.} per exploration round bounded by $\ell(\hat{\alpha}_i,\ \hat\beta_i,\ p_i)$ where
\begin{gather}
\label{eq:loss}
        \ell({\alpha},\ {\beta},\ p) = \max\big(\ell^+({\alpha},\ {\beta},\ p),\ \ell^-({\alpha},\ {\beta},\ p)\big)\,,\\
\textstyle\text{with }\,        \ell^+({\alpha},\ {\beta},\ p) = \big(1-p\big)\alpha - \frac{1}{2}(p\alpha+(1-p)\beta),\quad
        \ell^-({\alpha},\ {\beta},\ p) = p\beta - \frac{1}{2}(p\alpha+(1-p)\beta)\,.\nonumber
\end{gather}
In this definition, $\ell^+$ and $\ell^-$ are per-round regrets of using parameter $p$ when the (estimated) marginal gains are $(\alpha,\ \beta)$, corresponding to the two cases $\{i\in A^*\}$ and $\{i\notin A^*\}$. As one wants to hedge against both eventualities, we consider the worst-case loss $\ell$ which explains the max of both $\ell^+$ and $\ell^-$ in in Eq.~\eqref{eq:maxRT}.

\UpdateExplo~checks if, given estimations $(\hat \alpha_i,\ \hat \beta_i)$ and a current number of exploration rounds $4\tau_i$, it is possible to find a parameter $p_i$ so that the errors from uncertainties $\smash{\frac{g_{T, \delta}}{\sqrt{\tau_i}}}$ are absorbed by the (hopefully negative) loss $\ell(\hat{\alpha}_i,\ \hat{\beta}_i,\ p_i)$\,. Formally, it looks for the existence of a $p_i\in[0, 1]$ so that
\begin{align}
    \textstyle l(\hat \alpha_i,\ \hat\beta_i,\ p_i)+\frac{g_{T, \delta}}{\sqrt{\tau_i}}\leq 0\,,
\end{align}
which is guaranteed to happen after a logarithmic number of rounds (Proposition~\ref{prop:explorationSuff}). If it is the case, \UpdateExplo~returns this parameter $p_i$ and makes \DGETC move on to the next item. Otherwise, the exploration for the current item $i$ continues unless it has already lasted too long (i.e. if $\tau_i\geq\tau_{\max }$). In this case, \UpdateExplo~returns parameter $p_i=\frac{\hat \alpha_{i,+}}{\hat \alpha_{i,+} + \hat \beta_{i,+}}$ and makes \DGETC move on to the next step. This last choice for $p_i$ ensures the loss $\smash{\ell(\hat{\alpha_i},\ \hat{\beta}_i,\ p_i)}$ to be negative (or null) in the exploitation phase and the per-round regret for item $i$ to be bounded simply by $\frac{g_{T, \delta}}{\sqrt{\tau_{\max }}}$\,.

While \RGL~\citep{fourati2023randomizedgreedylearningnonmonotone} devotes the same number of rounds to all the items in the exploration phase, Subroutine \UpdateExplo~enables more flexibility. In particular, Section~\ref{sec:theory} links the number of exploration rounds necessary with problem-dependent quantities.

\begin{rremark}
    The possibility to counterbalance the accumulated errors with negative losses is enabled by the approximate regret criterion using the worst-case $1/2$ ratio, and an in-depth analysis of the original Double-Greedy algorithm. In all generality, this kind of intuition could also be applied to other methods to recover similar logarithmic upper bounds.
\end{rremark}

\section{Theoretical guarantees for \DGETC}
\label{sec:theory}

This section presents theoretical guarantees satisfied by our approach. We introduce a concept of problem-dependent \emph{hardness} that characterizes how difficult it can be to maximize a given submodular function with our \emph{Double-Greedy} approach. We then show that \DGETC satisfies logarithmic $1/2$-approximate pseudo-regret upper bounds which depend on this hardness, with a $O(dT^{2/3}\log(dT)^{1/3})$ worst-case.

\begin{rremark}
    We remind that the items are taken in an arbitrary order, and the quantities may depend on it.
\end{rremark}

\subsection{Double-Greedy hardness}
\label{sec:hardness}

The following hardness notion relates to the sufficient number of exploration rounds that guarantees to find parameters $(p_i)_{i\in[d]}$ suitable to induce zero $1/2$-approximate regret during the exploitation.

\begin{ddefinition}[DG-hardness] \label{def:DGhardness}
    Let $\cD$ be a set of $d$ elements (considered in an given order). Let $f$ be a submodular set-function over $\cD$ and $i$ be an item in $\cD$\,.
    
    We define the \emph{local DG-hardness} for item $i$ as\\
\begin{center}
\begin{minipage}{0.58\textwidth} 
\vspace{-2cm}
    \begin{align*}
        h_{f,i} = \max_{X\subseteq[i-1]} \frac{\big(\alpha_f(i, X)_+ +\beta_f(i, X)_+\big)^2}{\big(\alpha_f(i, X)_+-\beta_f(i, X)_+\big)^4}\,,
    \end{align*}
    where $(\,\cdot\,)_+ = \max\{\,\cdot\,,0\}$ and
    \begin{align*}
        &\alpha_f(i, X) = f(X\cup\{i\})-f(X)\,,\\
        &\beta_f(i, X) = f\big((\cD\setminus[i])\cup X\big)-f\big((\cD\setminus[i-1])\cup X\big)\,.
    \end{align*}
\noindent
We define the \emph{global DG-hardness} as $    H_f = \sum_{i\in[d]}h_{f,i}
$\,.
\end{minipage}
\hfill
\begin{minipage}{.4\textwidth}
\vspace{-1.5cm}
\includegraphics[width=\textwidth]{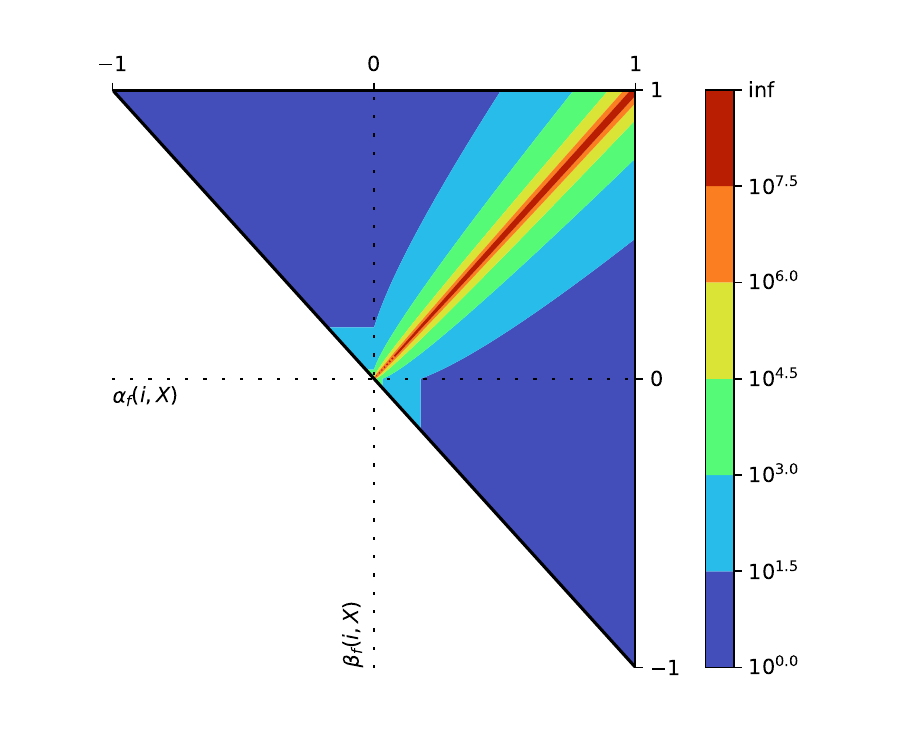}
\captionof{figure}{$h_{f,i}$ as a function of $\alpha_f(i,X)$ and $\beta_f(i,X)$ for $c=1$\,.}
\end{minipage}
\end{center}

\end{ddefinition}

\begin{rremark}
\begin{itemize}[nosep]
    \item This definition is actually not completely tight, as we will see in the analysis. But this form is more readable and convenient to use than the whole case disjunctions depending on the $(\alpha,\ \beta)$ configurations.
    \item We can also define a dual quantity, a \emph{local DG-gap} $\Delta_{f, i} = \big(h_{f,i}\big)^{-1/2}$\,, playing the same role as the suboptimality gaps in pseudo-regret upper bounds for stochastic multi-armed bandits (it is homogeneous to a difference of rewards). The corresponding \emph{global DG-gap} would be $\Delta_f = H_f^{-1/2}$\,.
\end{itemize}
\end{rremark}

\begin{eexample*} 
For illustration purposes, let's consider the following function $g$\,: we assume there exists $(\xi_i) \in [-1,1]^d$ and $\nu \in (0, 1]$ such that for all $X \subseteq [d]$\,,
\begin{equation}
\label{eq:submodular-example}
    g(X) = \Bigg(\sum_{i\in X,\ \xi_i\geq0} \xi_i\Bigg)^\nu - \Bigg(\sum_{i\in X,\ \xi_i<0} -\xi_i\Bigg)^{1 / \nu} + \|\xi_{-}\|_1^{1/\nu}\,,
\end{equation}
where $\xi_- = (\xi_i \mathds{1}\{\xi_i < 0\})_i$ and $\|\xi_{-}\|_1^{1/\nu}$ is here to guarantee the positivity of $g$\,.
Then $g$ is submodular and for all $i \in [d]$\,:
\begin{align*}
\Delta_{g,i} =\begin{cases}
g([i]) - g([i-1]) &\text{ if } \xi_i \geq 0\,, \\
g(\cD \setminus [i]) - g(\cD \setminus [i-1]) &\text{ if } \xi_i <0\,.
\end{cases}
\end{align*}
These expressions remind the notion of suboptimality gaps common in  bandit literature. If $\xi_i \geq 0$ then $i\in \OPT$ and the DG-gap corresponds to the reward gained by adding $i$ to $[i-1]$\,. If $\xi_i<0$ then $i \notin \OPT$ and the DG-gap corresponds to the reward increase when removing $i$ from $\{i, i+1, \dots, d\}$\,. 

Notably, when $g$ is linear ($\nu=1$), then the gaps $\Delta_{g,i} = \xi_i$ are independent from the ordering. They are intuitive as they correspond to the value of adding or removing the item, and the optimal set is the one containing all the items for which $\xi_i>0$ (assuming that no item has a gap of $0$).
\end{eexample*}

\subsection{Regret upper bounds for \DGETC}
\label{sec:regret}

This section presents our main result. We state a $1/2$-approximate pseudo-regret upper bounds for \DGETC, the proof of which is outlined in Section~\ref{sec:analysis}.

\begin{ttheorem}
\label{th:DGETC}
    Let $\cD$ be a finite set of $d\in\bN^*$ items, $T\in\bN^*$ a horizon with $d(T\sqrt{\log(dT)})^{2/3}\leq \frac{T}{2}$, $\sigma\in\bR^*_+$ and $c\in\bR^*_+$\,. Let $\delta>0$\,.
    
    Then, with probability greater than $1-10\delta/T$, \DGETC satisfies
    \[
        \smash{R_T
        \leq C_1\min\Big\{H_f\log(dT),\ dT^{2/3}\log(dT)^{1/3}\Big\}\,,}
    \]
    where $C_1$ is a constant independent from $d$, $T$ and $\delta$.
    
    Likewise, in expectation,
    \[
        \smash{\bE[R_T] \leq C_2 \min\Big\{H_f\log(dT),\ dT^{2/3}\log(dT)^{1/3}\Big\}\,,}
    \]
    where $C_1$ is a constant independent from $d$ and $T$\,.
\end{ttheorem}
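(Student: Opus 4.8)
The plan is to control $R_T$ on a single high-probability event and then pass to expectation. First I would introduce the concentration event $\cE$ on which, simultaneously for every item $i\in[d]$, the empirical marginal gains $(\hat\alpha_i,\hat\beta_i)$ track the true ones closely enough that the exploitation regret charged to item $i$ is at most $\ell(\hat\alpha_i,\hat\beta_i,p_i)+g_{T,\delta}/\sqrt{\tau_i}$; this is the content of Proposition~\ref{prop:concentrationExploration}, whose error term $g_{T,\delta}$ is engineered (via the $\sqrt{\log(dT)/T}$ and $(\log(dT)/T)^{1/3}$ corrections) to absorb estimation error, the Bernoulli sampling in \SamplingSet, and noise fluctuation across all exploitation rounds at once. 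A union bound over the $d$ items and the $T$ rounds then gives $\bP(\cE)\ge 1-10\delta/T$. On $\cE$ I would split $R_T$ into the contribution of the exploration rounds and that of the exploitation rounds and bound each separately.

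The exploitation contribution is handled by transporting the telescoping identity behind the Double-Greedy guarantee (Theorem~\ref{th:DG}) to our stochastic setting. The key structural fact is that, per exploitation round, the quantity $\tfrac12 f(A^*)-\bE_K[f(A_t)]$ (expectation over the draws $(K_j)$ of \SamplingSet) telescopes into a sum over items whose $i$-th term equals $\ell^+(\alpha_i,\beta_i,p_i)$ if $i\in A^*$ and $\ell^-(\alpha_i,\beta_i,p_i)$ if $i\notin A^*$. Since $A^*$ is unknown we dominate both branches by $\ell=\max(\ell^+,\ell^-)$, and on $\cE$ each per-item term is at most $\ell(\hat\alpha_i,\hat\beta_i,p_i)+g_{T,\delta}/\sqrt{\tau_i}$. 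The stopping rule of \UpdateExplo forces this to be non-positive: either item $i$ leaves the loop with $\Lambda\neq\emptyset$, which directly enforces $\ell(\hat\alpha_i,\hat\beta_i,p_i)+g_{T,\delta}/\sqrt{\tau_i}\le 0$, or it exits through the $\tau_i\ge\tau_{\max}$ fallback with $p_i=\hat\alpha_{i,+}/(\hat\alpha_{i,+}+\hat\beta_{i,+})$, for which $\ell\le 0$ by construction. Summing over items and over the exploitation rounds shows the exploitation regret is non-positive on $\cE$, so $R_T$ is bounded by the exploration regret alone.

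For the exploration regret, each of the $4\tau_i$ exploration rounds of item $i$ incurs regret at most $c/2$ by boundedness of $f$, so $R_T\le 2c\sum_{i\in[d]}\tau_i$ on $\cE$. I would bound each $\tau_i$ in two complementary ways: trivially $\tau_i\le\tau_{\max}=T^{2/3}\log(dT)^{1/3}$ via the fallback, and, by Proposition~\ref{prop:explorationSuff}, $\tau_i\le C\,g_{T,\delta}^2\,h_{f,i}=O(h_{f,i}\log(dT))$. The latter comes from computing the most negative loss achievable over $p\in[0,1]$ for given marginal gains, which behaves like $-(\alpha_+-\beta_+)^2/(\alpha_++\beta_+)$; the stopping condition is met once $\sqrt{\tau_i}$ exceeds $g_{T,\delta}(\alpha_++\beta_+)/(\alpha_+-\beta_+)^2$, and taking the worst sampled prefix $X\subseteq[i-1]$ recovers precisely the fourth-power denominator in the definition of $h_{f,i}$. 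Using $\sum_i\min\{a_i,b\}\le\min\{\sum_i a_i,\,db\}$ with $a_i=O(h_{f,i}\log(dT))$ and $b=\tau_{\max}$ yields $\sum_i\tau_i\le\min\{O(H_f\log(dT)),\,dT^{2/3}\log(dT)^{1/3}\}$, which is the claimed high-probability bound; the hypothesis $d(T\sqrt{\log(dT)})^{2/3}\le T/2$ guarantees the exploration phase cannot swallow the horizon, so an exploitation phase genuinely exists and the decomposition is legitimate.

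Finally, I would obtain the bound in expectation by writing $\bE[R_T]=\bE[R_T\indicator_{\cE}]+\bE[R_T\indicator_{\cE^c}]$: the first term is controlled by the high-probability estimate, while on $\cE^c$ the deterministic bound $R_T\le cT/2$ together with $\bP(\cE^c)\le 10\delta/T$ contributes only the additive constant $5c\delta$, absorbed into $C_2$. The step I expect to be the main obstacle is the exploitation analysis of the second paragraph: faithfully carrying the Double-Greedy telescoping over to the noisy, randomly-sampled regime — verifying that the per-round exploitation regret really does factor through the per-item losses $\ell^\pm$ uniformly over the random exploitation rounds, and that the worst-case hedge $\ell$ legitimately dominates the unknown $A^*$-dependent branch — is where the genuine work lies. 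The second delicate point is Proposition~\ref{prop:explorationSuff}, which ties the data-dependent stopping time $\tau_i$ to the fourth-power hardness $h_{f,i}$ and thereby produces the logarithmic, problem-dependent regime.
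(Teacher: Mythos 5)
Your proposal follows essentially the same route as the paper: the same high-probability event $\cE$ from Proposition~\ref{prop:concentrationExploration}, the same Double-Greedy telescoping of the exploitation regret into per-item branches $\ell^{\pm}$ dominated by $\ell=\max(\ell^+,\ell^-)$, the same two-way bound on each $\tau_i$ (via Proposition~\ref{prop:explorationSuff} for the hardness-dependent side and $\tau_{\max}$ for the worst case), and the same passage to expectation by paying $\bP(\cE^c)\cdot cT/2$. One intermediate claim is not correct as written, although it does not change the final order: you assert that on $\cE$ the exploitation regret is non-positive because in the fallback exit ($\tau_i\geq\tau_{\max}$) one has $\ell(\hat\alpha_i,\hat\beta_i,p_i)\leq 0$. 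But the per-item exploitation bound you establish on $\cE$ is $\ell(\hat\alpha_i,\hat\beta_i,p_i)+g_{T,\delta}/\sqrt{\tau_i}$, and in the fallback case only the first term is non-positive; there remains a residual positive contribution of order $(T-\tau)\,g_{T,\delta}/\sqrt{\tau_{\max}}=O(T^{2/3}\log(dT)^{1/3})$ for each capped item. The paper carries this term explicitly (the second summand in Eq.~\eqref{eq:templateFinal}); since it has the same order as the exploration cost $2c\tau_{\max}$ you already charge to such items, your conclusion survives with a larger constant, but the statement that the exploitation regret vanishes on $\cE$ is false whenever some item hits the cap, and a complete write-up must account for this term.
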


\begin{rremark}
We can get more fine-grained bounds by using the local DG-hardnesses instead of the global one. From Eq.\ref{eq:template} at the end of Section~\ref{sec:analysis}, we can keep the per-item granularity to get with probability at least $1-10\delta/T$
\begin{align*}
    R_T\leq C_3 \sum_{i\in[d]}\min\Big\{h_{f,i}\log(dT),\ T^{2/3}\log(dT)^{1/3}\Big\}\,,
\end{align*}
where $C_3$ is a constant independent from $d$, $T$ and $\delta$\,. 

In particular, depending on the scale of the horizon $T$ with respect to the different local hardnesses $(h_{f, i})_{i\in[d]}$\,, we obtain a mixed sum of some logarithmic terms, and others of magnitude $T^{2/3}\log(dT)^{1/3}$\,.
\end{rremark}

\section{Analysis of \DGETC}
\label{sec:analysis}

This section presents a sketch of proof for Theorem~\ref{th:DGETC}.

We denote $\tau=\sum_{i\in[d]}4\tau_i$ the last exploration round. For all the items $i$\,, we also denote $t_i = \sum_{j\leq i}4\tau_i$\,, the last exploration round for item $i$\,. In this section, we have $i$-indices to denote items, and $t$-indices to denote that we place ourselves at round $t\in\bN^*$\,. When $t$ is not made explicit (notably for $\hat{\alpha}_i$\,, $\hat\beta_i$ and $p_i$), it means that we place ourselves after round $t_{i}$ (and that these parameters are fixed).

\paragraph{Outline of the proof. }
The idea of the proof is to find a high-probability event (namely, $\cE$) under which the exploration phase takes a logarithmic number of rounds per-item, and the regret is non positive during the exploitation phase. 
To that end, we first break the per-round regret of the exploitation phase down into per-item contributions (Section~\ref{sec:itemBreak}). Using this decomposition, we highlight an event $\cE$ under which the per-round, per-item, regret is bounded by $\smash{l(\hat\alpha_i,\ \hat\beta_i,\ p_i)+\frac{g_{T, \delta}}{\sqrt{\tau_i}}}$ for all the items $i$ (Section~\ref{sec:highProbaExploitation}). Lastly, we prove that under $\cE$\,, depending on the \emph{DG-hardness} of $f$ (Definition~\ref{def:DGhardness}), a logarithmic number of exploration rounds is sufficient to find a weights $p_i$ so that $\smash{l(\hat\alpha_i, \hat\beta_i, p_i)+\frac{g_{T, \delta}}{\sqrt{\tau_i}}\leq0}$ for all items $i$\,. Additionally, Subroutine \UpdateExplo~(Algorithm~\ref{alg:UpdateExplo}) returns a
parameters $p_i$ so that $l(\hat\alpha_i, \hat\beta_i, p_i)\leq 0$ when $\tau_i$ reaches $\tau_{\max }$ for item $i$ (Section~\ref{sec:explorationBound}), so that the regret for this item remains bounded by $T^{2/3}\log(dT)^{1/3}$ when the estimation needs more rounds than what can 
be afforded.

\paragraph{Template bound. }
Let $\cE$ be an event, defined later in Section \ref{sec:highProbaExploitation}. Then, the $1/2$-approximate pseudo-regret can be bounded as
\begin{align}
    \label{eq:template}
    R_T 
    &\leq \indicator_{\{\cE^c\}}\frac{cT}{2} + \indicator_{\{\cE\}}\Bigg( 2c\sum_{i=1}^d \tau_i + \frac{1}{2}\sum_{t=\tau+1}^Tr_t\Bigg)\,.
\end{align}
where $r_t = f(A^*)-2f(A_t)$.  

Under event $\cE^c$\,, the pseudo-regret is upper bounded by a worst case $cT/2$\,. Under $\cE$, each item $i$ uses $4\tau_i$ exploration rounds, each being bounded by a worst case $c/2$ regret, the rest of the rounds (between $\tau+1$ and $T$) are devoted to the exploitation and their regret is upper bounded in the following. In particular, they yield no regret when the exploration is successful and the instance is “easy" enough.

\subsection{Double-Greedy breakdown: Per-item exploitation regrets}
\label{sec:itemBreak}

We use an approach similar to \cite{Buchbinder2012} to bound the per-round exploitation regret $r_t$ with a sum of per-item contributions.

\paragraph{Item-wise breakdown.}
Let $t>\tau$\,. We considers sets $(A^*_{i, t})_{i\in[d]}$\,, with $A^*_{0, t}=A^*$ and $A^*_{d, t}=A_t$\,, constructed to control the evolution of $(f(A_{i, t}))_{i\in[d]}$ from $f(A^*)$ to $f(A_t)$ using the coefficients $(\alpha_{i, t},\ \beta_{i, t})_{i\in[d]}$\,. We define
\begin{align}
    &\text{For }i=0\,, &A^*_{0, t}=A^*\,,\hspace{1cm} &\text{with } X_{0, t} = \emptyset\,, \hspace{.5cm} Y_{0, t}=\cD\,,\nonumber\\
    &\forall i\in[d]\,, &  A^*_{i, t} = (A^*\cup X_{i, t}) \cap Y_{i, t}\,,\hspace{1cm} &\text{with } X_{i, t} \subseteq A^*_{i, t}\subseteq Y_{i, t}\,,\\
    &\text{For }i=d, &A^*_{d, t}=X_{d, t}=Y_{d, t}=A_t\,,\hspace{1cm} &\nonumber
\end{align}
where $X_{i, t}=\{j\leq i,\ K_{j, t}=1\}$ and $Y_{i, t}=\cD\setminus\{j\leq i,\ K_{j, t}=0\}$ are the sets defined in Subroutine \SamplingSet~(Algorithm~\ref{alg:SamplingSet}).

Using these sets and the definition of $r_t$ in Eq.~\eqref{eq:template}, a telescopic argument yields 
\begin{align}
\label{eq:telescopBreak}
    r_t 
    &\leq f(A^*)-f(A_t) - \frac{1}{2}\Big[2f(A_t) - (f(\emptyset)+f(\cD))\Big] \nonumber \hspace{3cm} \leftarrow(f\geq0)\\
    &=\Big[f(A^*_{0,t})-f(A^*_{d, t})\Big]-\frac{1}{2}\Big[f(X_{d,t})-f(X_{0, t})+f(Y_{d,t})-f(Y_{0, t})\Big] \nonumber\\
    &= \sum_{i=1}^d\Big[f(\OPT_{i-1, t})-f(\OPT_{i, t}) - \frac{1}{2}\big(K_{i, t}\alpha_{i, t}+(1-K_{i, t})\beta_{i, t}\big) \Big]\,,
\end{align}
where for all $i\in[d]$\,, $\begin{aligned}[t]
    \alpha_{i, t} &= f(X_{i-1, t}\cup \{i\}) - f(X_{i-1, t})\,,\\
    \beta_{i, t} &= f(Y_{i-1, t}\setminus \{i\}) - f(Y_{i-1, t})\,.
\end{aligned}$

\paragraph{Submodularity.}
While the marginal gains $(\alpha_{i, t},\ \beta_{i, t})_{i\in[d]}$ can be estimated, the sets $A^*$, and $(A^*_{i, t})_{i\in[d]}$ remain unknown. However, the definition of $(A^*_{i, t})_{i\in[d]}$ and submodularity yield
\begin{itemize}[nosep]
    \item If $\{i\in A^*\}$\,, then $
        f(\OPT_{i-1, t})-f(\OPT_{i, t}) \leq (1-K_{i, t})\alpha_{i, t}$\,,
    \item  Else $\{i\notin A^*\}$\,, and $f(\OPT_{i-1, t})-f(\OPT_{i, t}) \leq K_{i, t}\beta_{i, t}$\,.
\end{itemize}
Using these inequalities, Eq.~(\ref{eq:telescopBreak}) becomes
\begin{align*}
    r_t 
    &\leq \sum_{i\in[d]}\Big[\indicator_{\{i\in\OPT\}} (1-K_{i, t})\alpha_{i, t} + \indicator_{\{i\notin\OPT\}} K_{i, t}\beta_{i, t} - \frac{1}{2}\big(K_{i, t}\alpha_{i, t}+(1-K_{i, t})\beta_{i, t}\big)\Big]\,.
\end{align*}
Since $\{i\in\OPT\}$ and $\{i\notin\OPT\}$ are exclusive events, we have
\begin{align}
\label{eq:maxRT}
    \sum_{t=\tau+1}^Tr_t\leq \sum_{i\in[d]}\max\big\{R_{T, i}^+,\ R_{T, i}^-\big\}\,,
\end{align}
where $\begin{aligned}[t]
    R_{T, i}^+ &= \textstyle\sum_{t=\tau+1}^T \Big[(1-K_{i, t})\alpha_{i, t} -\frac{1}{2}\big(K_{i, t}\alpha_{i, t}+(1-K_{i, t})\beta_{i, t}\big)\Big]\,,\\
    R_{T, i}^- &= \textstyle\sum_{t=\tau+1}^T \Big[K_{i, t}\beta_{i, t} -\frac{1}{2}\big(K_{i, t}\alpha_{i, t}+(1-K_{i, t})\beta_{i, t}\big)\Big]\,.
\end{aligned}$

\subsection{High-probability exploitation regret}
\label{sec:highProbaExploitation}

Let $i\in[d]$, the objective now is to control $\max\big\{R_{T, i}^+,\ R_{T, i}^-\big\}$ from Eq.~\eqref{eq:maxRT}. To that end, the following proposition (proven in Appendix~\ref{app:highProbaExploitation}) states how the errors coming from the different randomness sources concentrate.

\begin{restatable}{pproposition}{concentrationExploration}
\label{prop:concentrationExploration}
    Let $\cH$ and $\cE$ be the event
    \begin{align*}
    \cH = \left.
    \begin{cases}\forall i\in[d]\,,\ \forall t> t_{i-1}\,,\
        &|\bar \alpha_i - \hat{\alpha}_{i, t}|\leq \sqrt{2\sigma^2+c^2}\sqrt{2\frac{\log(dT/\delta)+\log(1+4\tau_{i, t})}{\tau_{i, t}+1}}\,,\\
    &|\bar \beta_i - \hat{\beta}_{i, t}| \leq \sqrt{2\sigma^2+c^2}\sqrt{2\frac{\log(dT/\delta)+\log(1+4\tau_{i, t})}{\tau_{i, t}+1}}
    \end{cases}\right\}\,,\\
    \cE = \cH\cap\Bigg\{\forall i\in[d]\,,\hspace{.5cm}\max\big\{R_{T, i}^+, R_{T, i}^-\big\} -(T-\tau)\Big(l(\hat\alpha_i,\ \hat\beta_i,\ p_i) + \frac{g_{T, \delta}}{\sqrt{\tau_i}}\Big)\leq 0\Bigg\}\,,
\end{align*}
where for all $i\in[d]$\,, $\bar\alpha_{i} = \bE\big[\alpha_{i, t}|(p_j)_{j < i}\big]$ and $\bar \beta_i = \bE\big[\beta_{i, t}|(p_j)_{j < i}\big]$\,, both quantities being constant for rounds $t> t_{i-1}$\,, and $g_{T, \delta}$ is defined in Eq.~\eqref{eq:gTdelta}.

Then, $\bP(\cH^c)\leq \frac{4\delta}{T}$\,, and $\bP(\cE^c)\leq \frac{10\delta}{T}$\,.
\end{restatable}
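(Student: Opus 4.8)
The plan is to establish the two probability bounds separately, the second building on the first through the union bound $\bP(\cE^c)\le\bP(\cH^c)+\bP\big(\cH\cap\{\text{the exploitation condition fails for some }i\}\big)$.

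\textbf{Controlling $\bP(\cH^c)$.} I would first recognize each estimator as a running average of sub-Gaussian martingale differences. During the exploration of item $i$, an update uses the plays $A_t=X_{i-1}$, $A_{t+1}=X_{i-1}\cup\{i\}$, so that $Z_{t+1}-Z_t=\alpha_{i,t}+(\eta_{t+1}-\eta_t)$, and $\hat\alpha_{i,t}$ is the empirical mean of these increments over the $\tau_{i,t}$ iterations performed so far. Subtracting $\bar\alpha_i=\bE[\alpha_{i,t}\mid(p_j)_{j<i}]$, the increment $(\alpha_{i,t}-\bar\alpha_i)+(\eta_{t+1}-\eta_t)$ has conditional mean zero. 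Conditioning first on the sampling $(K_{j,t})_{j<i}$, which makes $\alpha_{i,t}-\bar\alpha_i$ a bounded centered variable (hence $c^2$-sub-Gaussian by Hoeffding's lemma, since $\alpha_{i,t}\in[-c,c]$), and then using that $\eta_{t+1}-\eta_t$ is $2\sigma^2$-sub-Gaussian conditionally on the past, the increment is $(2\sigma^2+c^2)$-sub-Gaussian, which explains the prefactor $\sqrt{2\sigma^2+c^2}$. Applying a time-uniform (anytime) self-normalized concentration inequality to this martingale yields, for a single estimator, a bound of the stated form holding simultaneously for all $t$ (the $\log(1+4\tau_{i,t})$ term being the method-of-mixtures price of anytime validity). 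Taking the per-estimator one-sided failure level $\delta/(dT)$ makes $\log(dT/\delta)$ appear, and a union bound over the $2d$ two-sided estimator events ($\hat\alpha_i$ and $\hat\beta_i$, $i\in[d]$) gives $\bP(\cH^c)\le 2d\cdot2\cdot\delta/(dT)=4\delta/T$. The anytime nature is essential because $\tau_i$ is a data-dependent stopping time.

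\textbf{Reducing $\cE$ to a fluctuation bound on $\cH$.} It remains to bound, on $\cH$, the probability that some $\max\{R_{T,i}^+,R_{T,i}^-\}$ exceeds $(T-\tau)\big(\ell(\hat\alpha_i,\hat\beta_i,p_i)+g_{T,\delta}/\sqrt{\tau_i}\big)$. Conditionally on the end-of-exploration $\sigma$-algebra $\cF_\tau$ (which fixes $\tau$, the $p_i$ and the $\tau_i$), the exploitation rounds are i.i.d.; since $K_{i,t}\sim\cB(p_i)$ is drawn independently of the sets $X_{i-1,t},Y_{i-1,t}$ that determine $\alpha_{i,t},\beta_{i,t}$, the conditional mean of the per-round summand of $R_{T,i}^+$ (resp.\ $R_{T,i}^-$) is exactly $\ell^+(\bar\alpha_i,\bar\beta_i,p_i)$ (resp.\ $\ell^-$), recovering the definitions in Eq.~\eqref{eq:loss}. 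I would split the target gap into a \emph{transfer} term $\ell^\pm(\bar\alpha_i,\bar\beta_i,p_i)-\ell^\pm(\hat\alpha_i,\hat\beta_i,p_i)$ and a \emph{fluctuation} term $R_{T,i}^\pm-(T-\tau)\ell^\pm(\bar\alpha_i,\bar\beta_i,p_i)$. Because $\ell^+$ and $\ell^-$ are linear in $(\alpha,\beta)$ with coefficients bounded by $1$ and $\tfrac12$ for $p\in[0,1]$, the transfer term is at most $\tfrac32$ times the $\cH$-width, i.e.\ of order $\sqrt{2\sigma^2+c^2}\,\sqrt{\log(dT/\delta)/\tau_i}$, matching the leading part of $g_{T,\delta}/\sqrt{\tau_i}$.

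\textbf{Fluctuation and conclusion.} For the fluctuation term I would apply a one-sided Hoeffding inequality to the $T-\tau$ i.i.d.\ bounded summands (each of range $O(c)$) conditionally on $\cF_\tau$, at level $\approx 3\delta/(dT)$; dividing by $T-\tau$ and using $T-\tau\ge T/2$ (ensured by the horizon condition $d(T\sqrt{\log(dT)})^{2/3}\le T/2$, which caps the total exploration) gives a per-round allowance of order $c\sqrt{\log(dT/\delta)/T}$. A union bound over the $2d$ one-sided events ($R_{T,i}^+$ and $R_{T,i}^-$) contributes $6\delta/T$, so $\bP(\cE^c)\le\bP(\cH^c)+6\delta/T\le10\delta/T$. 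The remaining verification is computational: one checks that the explicit $g_{T,\delta}$ of Eq.~\eqref{eq:gTdelta}, whose correction factors $2\sqrt{\log(dT)/T}$ and $\tfrac{9c}{\sqrt{2\sigma^2+c^2}}(\log(dT)/T)^{1/3}$ inflate the leading term, dominates the sum of the transfer width and the normalized fluctuation allowance for every $\tau_i\le\tau_{\max}=T^{2/3}\log(dT)^{1/3}$. The main obstacle is precisely that $\tau_i$, $p_i$ and $\tau$ are all random and data-dependent: the estimators are averaged up to a stopping time and the exploitation horizon $T-\tau$ is itself random. This is resolved by the anytime bound for $\cH$ (valid at all $\tau_{i,t}$, hence at the stopping time) and by conditioning on $\cF_\tau$ before applying Hoeffding, so that $p_i$, $\tau_i$ and $T-\tau$ may be treated as constants; what is left is the constant-chasing confirming that $g_{T,\delta}$ is large enough.
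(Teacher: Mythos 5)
Your proposal is correct and follows essentially the same route as the paper: the event $\cH$ is controlled by the same anytime method-of-mixtures bound (Lemma~\ref{lem:onlineConc}) with a union bound over $2d$ two-sided events at level $\delta/(dT)$, and the exploitation part is handled by conditional Hoeffding bounds plus a union bound contributing $6\delta/T$. The only difference is cosmetic: the paper splits the exploitation error into three terms $\bar E_{T,i}^\pm$, $\hat E_{T,i}^\pm$, $L_{T,i}^\pm$ with separate events $\cG$, $\cI$, $\cJ$, whereas you regroup them into a single fluctuation term around $\ell^\pm(\bar\alpha_i,\bar\beta_i,p_i)$ plus a deterministic transfer term controlled by $\cH$; both versions defer the final verification that $g_{T,\delta}$ absorbs all contributions to an unstated computation.
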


\paragraph{Template bound. } Reinjecting Eq.~\eqref{eq:maxRT} and Proposition~\ref{prop:concentrationExploration} yields
\begin{align}
\label{eq:templateConc}
    R_T 
    &\leq \indicator_{\{\cE^c\}}\frac{cT}{2} + \indicator_{\{\cE\}}\sum_{i\in[d]}\Bigg( 2c\tau_i + (T-\tau) \Big(l(\hat\alpha_i, \hat\beta_i, p_i) + \frac{g_{T, \delta}}{\sqrt{\tau_i}}\Big)\Bigg)\,,
\end{align}
where $\cE$ is the event defined Proposition \ref{prop:concentrationExploration}.

\subsection{Sufficient exploration}
\label{sec:explorationBound}

In this section, we analyze the exploration steps for each item and we exhibit sufficient conditions for them to only last a logarithmic number of rounds. The default choice of $\smash{p_i=\frac{\alpha_{i,+}}{\alpha_{i,+} + \beta_{i,+}}}$ when $\tau_i\geq \tau_{\max }$ in Subroutine \UpdateExplo~(Algorithm \ref{alg:UpdateExplo}) ensures $\ell(\hat{\alpha}_i,\ \hat \beta_i,\ p_i)\leq0$\,, which in turns yield a $\smash{O\big(T\sqrt{\log(dT))^{2/3}}\big)}$ regret upper bound for item $i$.

Subroutine \UpdateExplo~looks for a parameter $p\in[0, 1]$ so that both
\begin{align}
    \label{eq:condHat}
    \big(1-p\big)\hat\alpha_i - \frac{1}{2}\big(p\hat \alpha_i+(1-p)\hat \beta_i\big) \leq -\frac{g_{T,\delta}}{\sqrt{\tau_i}}\,,\hspace{.3cm} \text{and}\hspace{.3cm}
    p\hat\beta_i - \frac{1}{2}(p\hat \alpha_i+(1-p)\hat \beta_i) \leq -\frac{g_{T,\delta}}{\sqrt{\tau_i}}\,.
\end{align}

\noindent
Under $\cE$\,, as we can upper bound $|\hat{\alpha}_i-\bar\alpha_i|$ and $|\hat{\beta}_i-\bar\beta_i|$ (Proposition~\ref{prop:concentrationExploration}), it is sufficient to have
\begin{align*}
    \begin{cases}
        \Big(1-p\Big)\bar\alpha_i - \frac{1}{2}(p\bar \alpha_i+(1-p)\bar \beta_i) &\leq -\frac{g_{T, \delta}}{\sqrt{\tau_i}}-\frac{3}{2}\sqrt{2\sigma^2+c^2}\sqrt{2\frac{\log(dT/\delta)+\log(1+4\tau_i)}{\tau_i+1}}\\
        p\bar\beta_i - \frac{1}{2}(p\bar \alpha_i+(1-p)\bar \beta_i) &\leq -\frac{g_{T, \delta}}{\sqrt{\tau_i}}-\frac{3}{2}\sqrt{2\sigma^2+c^2}\sqrt{2\frac{\log(dT/\delta)+\log(1+4\tau_i)}{\tau_i+1}}\,,
    \end{cases}
\end{align*}
for which it is in turn sufficient to have
\begin{align}
\label{eq:pConditions}
    p(\bar\beta_i-3\bar\alpha_i) \leq -\frac{g_i+\gamma_{T, \delta}}{\sqrt{\tau_i}}+(\beta_i-2\bar\alpha_i), \hspace{.3cm} \text{and}\hspace{.3cm}
    p(3\bar\beta_i - \bar \alpha_i) \leq -\frac{g_i+\gamma_{T, \delta}}{\sqrt{\tau_i}}+\bar\beta_i\,,
\end{align}
where $\gamma_{T, \delta}=3\sqrt{(2\sigma^2+c^2)(\log(dT/\delta)+\log(1+T))}$.

The following proposition gives sufficient conditions to find a $p_i$ for Eq.~\eqref{eq:condHat} to be satisfied.

\begin{restatable}{pproposition}{explorationSuff}
\label{prop:explorationSuff}
    For each items $i\in[d]$\,, under event $\cE$ defined in Proposition~\ref{prop:concentrationExploration}, \UpdateExplo~finds a weight $p_i$ such that $l(\hat \alpha_{i, t},\ \hat \beta_{i, t},\ p_i)+\frac{g_{T, \delta}}{\sqrt{\tau_{i, t}}}\leq 0$ before $4\tau_{i, t}$ the current number of exploration rounds for item $i$ has reached the value $4(g_{T,\delta}+\gamma_{T, \delta})^2\ h_{f,i}$\,.
\end{restatable}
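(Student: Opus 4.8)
The plan is to show that as soon as the exploration count $\tau_i$ crosses the stated threshold, the feasibility set $\Lambda$ tested by \UpdateExplo is nonempty, so that a suitable $p_i$ is returned. Everything is argued on the event $\cE$, on which Proposition~\ref{prop:concentrationExploration} lets me pass from the empirical marginals $(\hat\alpha_i,\hat\beta_i)$ to their conditional means $(\bar\alpha_i,\bar\beta_i)$ at the price of the additive $\gamma_{T,\delta}/\sqrt{\tau_i}$ term. Following the reduction that produces Eq.~\eqref{eq:pConditions}, it then suffices to exhibit $p\in[0,1]$ with $\ell(\bar\alpha_i,\bar\beta_i,p)\le -(g_{T,\delta}+\gamma_{T,\delta})/\sqrt{\tau_i}$, since under $\cE$ this forces the two inequalities of Eq.~\eqref{eq:condHat} for $(\hat\alpha_i,\hat\beta_i)$, hence $\Lambda\neq\emptyset$ and \UpdateExplo advances the item index.

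First I would minimise $p\mapsto\ell(\bar\alpha_i,\bar\beta_i,p)$ over $[0,1]$ using the decomposition of $\ell$ in Eq.~\eqref{eq:loss}. The key identity is $\ell^+(\bar\alpha_i,\bar\beta_i,p)-\ell^-(\bar\alpha_i,\bar\beta_i,p)=\bar\alpha_i-p(\bar\alpha_i+\bar\beta_i)$, which is affine in $p$ and vanishes at $p^\star=\bar\alpha_i/(\bar\alpha_i+\bar\beta_i)$. Submodularity gives $\alpha_f(i,X)+\beta_f(i,X)\ge 0$ for every $X$ (the marginal of inserting $i$ into the smaller set $X$ dominates that into the larger set $(\cD\setminus[i])\cup X$), hence $\bar\alpha_i+\bar\beta_i\ge 0$; so when both means are nonnegative $p^\star\in[0,1]$ and the two branches coincide there at value $-(\bar\alpha_i-\bar\beta_i)^2/\big(2(\bar\alpha_i+\bar\beta_i)\big)$. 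A short case check on the signs of $\bar\alpha_i,\bar\beta_i$, clipping $p^\star$ to an endpoint whenever one mean is negative, unifies everything into
\[
    \min_{p\in[0,1]}\ell(\bar\alpha_i,\bar\beta_i,p)=-\frac{(\bar\alpha_{i,+}-\bar\beta_{i,+})^2}{2(\bar\alpha_{i,+}+\bar\beta_{i,+})}\,.
\]
Thus $\ell+(g_{T,\delta}+\gamma_{T,\delta})/\sqrt{\tau_i}\le 0$ becomes feasible exactly once $\tau_i$ exceeds a constant multiple of $(g_{T,\delta}+\gamma_{T,\delta})^2(\bar\alpha_{i,+}+\bar\beta_{i,+})^2/(\bar\alpha_{i,+}-\bar\beta_{i,+})^4$, the precise numerical factor being absorbed into the $4\tau_i$ bookkeeping of the statement.

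The last step, which I expect to be the real obstacle, is to bound the realised ratio $(\bar\alpha_{i,+}+\bar\beta_{i,+})^2/(\bar\alpha_{i,+}-\bar\beta_{i,+})^4$ by the hardness $h_{f,i}$ of Definition~\ref{def:DGhardness}. Here $(\bar\alpha_i,\bar\beta_i)=\big(\bE[\alpha_f(i,X_{i-1})],\ \bE[\beta_f(i,X_{i-1})]\big)$ are expectations over the product law of $X_{i-1}$ fixed by the earlier weights $(p_j)_{j<i}$, whereas $h_{f,i}$ is a maximum over deterministic $X\subseteq[i-1]$; because $(\alpha,\beta)\mapsto(\alpha_++\beta_+)^2/(\alpha_+-\beta_+)^4$ is not convex, Jensen is unavailable and averaging could in principle shrink the gap $\bar\alpha_{i,+}-\bar\beta_{i,+}$. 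The resolution has to exploit the monotone coupling forced by submodularity --- $\alpha_f(i,\cdot)$ nonincreasing, $\beta_f(i,\cdot)$ nondecreasing along the subset lattice, with $\alpha_f+\beta_f\ge0$ pointwise --- so that the worst-case-over-$X$ ratio dominates the one evaluated at the expected marginals (consistent with the paper's remark that Definition~\ref{def:DGhardness} is a convenient rather than tight surrogate for the exact per-configuration bound). Granting this comparison, $\tau_i$ reaching $(g_{T,\delta}+\gamma_{T,\delta})^2 h_{f,i}$ makes $\Lambda\neq\emptyset$, which is exactly the assertion.
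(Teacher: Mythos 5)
Your route to the exploration threshold is sound and in fact cleaner than the paper's: instead of the five-zone case analysis of the two linear conditions in Eq.~\eqref{eq:pConditions} (Table~\ref{tab:exploThresholds}), you evaluate $\ell$ at the crossing point $p^\star=\bar\alpha_i/(\bar\alpha_i+\bar\beta_i)$ (clipped to an endpoint when a sign is negative), which is legitimate since Lemma~\ref{lem:marginal_sum} gives $\bar\alpha_i+\bar\beta_i\ge 0$, and obtain the single value $-(\bar\alpha_{i,+}-\bar\beta_{i,+})^2/\big(2(\bar\alpha_{i,+}+\bar\beta_{i,+})\big)$. One caveat: this is the exact minimum only in zones \circled{1}, \circled{3}, \circled{5}; in zones \circled{2} and \circled{4} the true minimum (attained at an endpoint) is strictly lower, so your display should be an inequality $\min_p\ell\le\cdots$ rather than an equality. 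Since only that direction is needed, the resulting threshold $\tau_i\gtrsim (g_{T,\delta}+\gamma_{T,\delta})^2\,(\bar\alpha_{i,+}+\bar\beta_{i,+})^2/(\bar\alpha_{i,+}-\bar\beta_{i,+})^4$ is a valid sufficient condition, and it matches the paper's dominant zone-\circled{3} entry.

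The genuine gap is the one you flag yourself and then grant: bounding the ratio evaluated at the \emph{averaged} marginals $(\bar\alpha_i,\bar\beta_i)=\bE[(\alpha_f(i,X_{i-1}),\beta_f(i,X_{i-1}))]$ by $h_{f,i}$, which is a \emph{maximum over deterministic} $X\subseteq[i-1]$. This is not a bookkeeping detail; it is where the proposition's content lives, and the monotone-coupling argument you sketch does not obviously close it. The map $(\alpha,\beta)\mapsto(\alpha_++\beta_+)^2/(\alpha_+-\beta_+)^4$ diverges on the diagonal $\alpha_+=\beta_+$, and submodularity only orders $\alpha_f(i,\cdot)$ (nonincreasing) and $\beta_f(i,\cdot)$ (nondecreasing) along chains of the lattice: two \emph{incomparable} sets $X_1,X_2$ may realize $(\alpha,\beta)\approx(v,0)$ and $(0,v)$ respectively, each with ratio $1/v^2$, while the product-measure average lands at $(v/2,v/2)$ where the ratio is infinite. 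So the inequality you need can fail for a pointwise bound of this form unless some additional structure is invoked. To be fair, the paper's own proof is equally silent here --- it merely asserts that the Table~\ref{tab:exploThresholds} entries ``are upper-bounded by the DG-hardness'' --- so you have correctly located the crux rather than missed it; but as written your proposal (like the paper's) does not establish the proposition, and a complete proof must either prove this mean-versus-max domination under the algorithm's sampling law or restate the hardness in terms of the averaged marginals.
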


\paragraph{Template bound. } Using Proposition~\ref{prop:explorationSuff}, the upper bound Eq.~\eqref{eq:templateConc} becomes
\begin{align}
    R_T 
    &\leq \indicator_{\{\cE^c\}}\frac{cT}{2} + \indicator_{\{\cE\}}\sum_{i\in[d]}\Bigg[2c\min\Big\{(g_{T,\delta}+\gamma_{T, \delta})^2h_{f,i}\,,\ \tau_{\max }\Big\}\nonumber\\
    &\hspace{5cm}+\indicator_{\big\{(g_{T,\delta}+\gamma_{T, \delta})^2h_{f,i}> \tau_{\max }\big\}} \tau_{\max }\frac{Tg_{T, \delta}}{(\tau_{\max })^{3/2}}\Bigg]\nonumber\\
    &=\indicator_{\{\cE^c\}}\frac{cT}{2} + \indicator_{\{\cE\}}\sum_{i\in[d]}\Big(2c + \frac{g_{T, \delta}}{\log(dT)^{1/2}}\Big)\min\Big\{(g_{T,\delta}+\gamma_{T, \delta})^2h_{f,i}\,,\ \tau_{\max }\Big\}\,,
    \label{eq:templateFinal}
\end{align}
where $\tau_{\max}= T^{2/3}\log(dT)^{1/3}$ is defined in Eq.\eqref{eq:tauMax}.

The high-probability result comes from event $\cE$ happening with probability greater than $1-\frac{10\delta}{T}$ (Proposition~\ref{prop:concentrationExploration}). Choosing $\delta=1$ yields the bound in expectation.

\section{Concluding remarks}

We propose and analyze Algorithm \DGETC (Algorithm~\ref{alg:DGETC}) for the online unconstrained submodular maximization problem, with stochastic bandit feedback. Our algorithm is a considerable improvement from other existing approaches, as it satisfies logarithmic upper bounds for the $1/2$-approximate pseudo-regret, dependent on a new notion of hardness that we introduce. Possible extensions include designing anytime variants, and algorithms adaptive to the adversarial/stochastic setting (best-of-both worlds). 

An interesting feature of \DGETC is that it leverages the looseness of worst-case approximation ratios in non-adversarial cases, and we argue that this kind of strategy could also be applied to other settings to yields similar performances.
 

\newpage

\bibliography{references}

\newpage

\appendix

\crefalias{section}{appendix} 
\DoToC

\section{Extended related works}
\label{app:related}

This section completes the related work of the main paper in Section~\ref{sec:related}. 

We discuss additional submodular optimization settings existing in the literature, namely, offline and online minimization, constrained maximization, as well as other maximization frameworks.

\paragraph{Submodular optimization (offline).} Submodularity is also studied in settings different from the unconstrained non monotone maximization case that we look at.

\emph{Minimization} can be solved in polynomial time \citep{grotschel1981ellipsoid} and there is an extensive line of work studying that setting. See \cite{lee2015faster, chakrabarty2017subquadratic,axelrod2020near, jiang2021minimizing, jiang2022minimizing, jiang2023convex} for the most recent ones. 

For \emph{maximization}, the constrained non monotone setting is even more challenging and there is a line of work constantly improving the approximations \citep{lee2009non, chekuri2011submodular, buchbinder2014submodular, vondrak2013symmetry, ene2016constrained, buchbinder2019constrained, tukan2024practical, buchbinder2024constrained}, which is known to be smaller than $.478$ in polynomial time \citep{qi2024maximizing}. In the monotone setting, there also exist a line of works interested in the identification of the best subset with the minimal number of potentially noisy calls to the function \citep{singla2016noisy, hassidim2017submodular, karimi2017stochastic, hassidim2018optimization}. Other papers also study maximization of a submodular function only known from given samples \citep{balkanski2016power, balkanski2017limitations}.

\paragraph{Online and bandit submodular optimization.} Both maximization and minimization are explored in the online learning and bandit literature.

For the \emph{minimization} version, \citet{hazan2012online} introduces an online adversarial setting and proposes an algorithm with sublinear regret. Its results are improved in \citet{matsuoka2021tracking} and \citet{ito2022revisiting}, the latter also proposing results for the bandit setting. In particular, a commonly used tool is the Lovász extension, reducing the problem to convex minimization.

The \emph{maximization} version is studied in \cite{streeter2008online} with a resource allocation perspective, giving guarantees for the $(1-1/e)$-approximate expected regret in the monotone setting. Its results are extended to matroid constraints in \citet{streeter2009online, golovin2014online}, and improved in \citet{harvey2020improved} using curvatures. The case of bandit feedback is also studied for monotone functions. \citet{yue2011linear} and \citet{guillory2011online} are early works works providing theoretical guarantees. They are followed by several papers considering variants of this setting \citep{kohli2013fast, gabillon2013adaptive, chen2018contextual}. 

\newpage
\section{Reminders on sub-Gaussianity}
\label{app:concentration}

We use sub-Gaussianity assumptions and common concentration tools to control deviations of the noise $(\eta_t)_{t\in[T]}$ and the randomization process of our approach. This section remind useful results.

\begin{restatable}[Sub-Gaussian]{ddefinition}{defSubgaussian}
Let $\sigma>0$ and $X$ be a real-valued random variable such that $\bE[X]=0$. We say that $X$ is $\sigma^2$-sub-Gaussian,  for all $\lambda\in\bR$,
\begin{align*}
    \bE[\exp(\lambda X)] \leq \exp\Big(\frac{\lambda^2\sigma^2}{2}\Big)\,.
\end{align*}
\end{restatable}

In particular, for bounded independent random variables, we have the following lemma.

\begin{restatable}[Hoeffding's inequality for sum of i.i.d. bounded r.v.]{llemma}{lemIidSubgaussian}
\label{lem:lemIidSubgaussian}
Let $\delta>0$, $N\in\bN^*$, and $(Z_n)_{n\in[N]}$ a family of i.i.d. real random variables bounded in $[a, b]$ where $(a, b)\in(\bR)^2$, with mean $\mu\in[a, b]$.

Then for all $n\in[N]$, $Z_n$ is $\frac{(b-a)^2}{4}$-sub-Gaussian, and with probability at least $1-\delta$,
\begin{align*}
    \frac{1}{N}\sum_{n=1}^N\big[Z_n-\mu\big] < \frac{b-a}{2}\sqrt{\frac{2}{N}\log(1/\delta)}\,.
\end{align*}
\end{restatable}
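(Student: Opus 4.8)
The plan is to split the statement into its two assertions and handle them in sequence, since the deviation inequality follows from sub-Gaussianity by the standard Chernoff argument. First I would establish the sub-Gaussian claim; because the sub-Gaussian definition requires a centered variable, I read it as the assertion that the centered variable $X_n = Z_n - \mu \in [a-\mu, b-\mu]$ is $\frac{(b-a)^2}{4}$-sub-Gaussian, which is precisely Hoeffding's lemma. Then I would combine the $N$ independent moment generating functions, apply Markov's inequality to $e^{\lambda S}$, and optimize the exponent over $\lambda$ to recover the stated threshold.

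For the sub-Gaussian step, fix $n$ and write $X = Z_n - \mu$, a centered variable whose range has length $b-a$. By convexity of $x\mapsto e^{\lambda x}$ on the interval $[a-\mu, b-\mu]$, one bounds $e^{\lambda x}$ by the affine interpolation through the two endpoints; taking expectations and using $\bE[X]=0$ kills the linear term and leaves $\bE[e^{\lambda X}]\leq e^{\psi(\lambda)}$. After setting $p = \frac{\mu-a}{b-a}$ and $u = \lambda(b-a)$, the exponent takes the form $\psi(\lambda) = -pu + \log(1-p+pe^{u})$. A direct computation gives $\psi(0)=\psi'(0)=0$ and $\psi''(u) = \frac{p(1-p)e^u}{(1-p+pe^u)^2}\leq \frac14$, the final bound being the fact that a Bernoulli variance is at most $1/4$. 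A second-order Taylor expansion then yields $\psi(\lambda)\leq \frac{\lambda^2(b-a)^2}{8}$, i.e. $\bE[e^{\lambda X}]\leq \exp\big(\frac{\lambda^2}{2}\cdot\frac{(b-a)^2}{4}\big)$, which is exactly $\frac{(b-a)^2}{4}$-sub-Gaussianity.

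For the tail bound, set $S = \frac{1}{N}\sum_{n=1}^N X_n$ with the $X_n$ i.i.d. and centered. By independence and the previous step, for $\lambda>0$ one has $\bE[e^{\lambda S}] = \prod_{n=1}^N \bE[e^{(\lambda/N)X_n}]\leq \exp\big(\frac{\lambda^2(b-a)^2}{8N}\big)$, so that $S$ is $\frac{(b-a)^2}{4N}$-sub-Gaussian. Markov's inequality applied to $e^{\lambda S}$ then gives, for all $\lambda>0$ and $t>0$, the bound $\bP(S\geq t)\leq \exp\big(-\lambda t + \frac{\lambda^2(b-a)^2}{8N}\big)$; minimizing the exponent at $\lambda = \frac{4Nt}{(b-a)^2}$ produces $\bP(S\geq t)\leq \exp\big(-\frac{2Nt^2}{(b-a)^2}\big)$. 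Setting the right-hand side equal to $\delta$ and solving for $t$ yields $t = \frac{b-a}{2}\sqrt{\frac{2}{N}\log(1/\delta)}$, and the strict inequality in the statement follows from the fact that the union/inversion can be taken at the open threshold.

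I expect the only genuinely delicate ingredient to be Hoeffding's lemma itself, specifically the clean verification that $\psi''\leq 1/4$ and the uniform control of the Taylor remainder in $\lambda$. Everything else is routine: the factorization of the moment generating function over the i.i.d. sample is immediate, the Chernoff optimization is a one-variable minimization, and the constants line up exactly with the claimed $\frac{b-a}{2}\sqrt{\frac{2}{N}\log(1/\delta)}$.
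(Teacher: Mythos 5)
Your proof is correct. Note that the paper itself does not prove this lemma at all---it defers to the literature, remarking that these are ``classical results \dots\ that can be found in \cite{wainwright2019high}''---and your argument (Hoeffding's lemma via the convexity bound and $\psi''\le 1/4$, then the Chernoff bound on the factorized moment generating function, optimized at $\lambda = 4Nt/(b-a)^2$) is precisely the standard proof given in such references, with the constants $\frac{(b-a)^2}{4}$ and $\frac{b-a}{2}\sqrt{\frac{2}{N}\log(1/\delta)}$ landing exactly as claimed; your reading that the sub-Gaussianity assertion concerns the centered variable $Z_n-\mu$ is also the right one, since the paper's definition of sub-Gaussian requires $\bE[X]=0$.
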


The sub-Gaussianity for bounded random variables an the concentration for the sums of i.i.d random variables are classical results proven that can be found \cite{wainwright2019high} for example.

As we estimate quantities in an online setting, with observations arriving sequentially and depending on our actions, we need a more powerful tool. This is provided by the following lemma.

\begin{restatable}[Hoeffding's inequality with martingales]{llemma}{lemOnlineConcentration}
\label{lem:onlineConc}
    Let $\delta>0$, $\sigma>0$. Let $(\cG_t)_{t\in\bN}$ be a filtration and $(Z_t)_{t\in\bN^*}$ a $(\cG_t)$-adapted martingale with $\bE[Z_1]=0$. We assume that for all $t\in\bN$, $Z_{t+1}$ is $\sigma^2$-sub-Gaussian conditionally to $\cG_t$. Let $(U_t)_{t\in\bN^*}$ be a $(\cG_t)$-predictable process.     
    Then, with probability at least $1-\delta$, for all $t\in\bN$
    \begin{align*}
        \frac{\sum_{s=1}^t U_sZ_s}{1+\sum_{s=1}^tU_s^2}< \frac{\sigma}{\sqrt{1+\sum_{s=1}^tU_s^2}}\sqrt{2\log(1/\delta)+\log\Big(1+\sum_{s=1}^tU_s^2\Big)}
    \end{align*}
\end{restatable}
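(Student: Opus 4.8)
The plan is to prove this self-normalized Hoeffding inequality by the standard \emph{method of mixtures} (pseudo-maximization over the free parameter $\lambda$), which is the cleanest route to the exact constants in the stated bound. Write $S_t = \sum_{s=1}^t U_s Z_s$ and $V_t = \sum_{s=1}^t U_s^2$, so that, after multiplying the claim through by $1+V_t$, the goal becomes to show that $S_t < \sigma\sqrt{1+V_t}\,\sqrt{2\log(1/\delta)+\log(1+V_t)}$ simultaneously for all $t$. For a fixed $\lambda\in\bR$ I would first establish that
\[
    M_t(\lambda) = \exp\Big(\lambda S_t - \tfrac{\lambda^2\sigma^2}{2}V_t\Big)
\]
is a nonnegative supermartingale for $(\cG_t)$ with $M_0(\lambda)=1$. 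The key step is the conditional increment bound: since $U_t$ is $\cG_{t-1}$-measurable (predictability) and $Z_t$ is $\sigma^2$-sub-Gaussian conditionally on $\cG_{t-1}$, one has $\bE\big[\exp(\lambda U_t Z_t)\mid\cG_{t-1}\big] \le \exp\big(\tfrac{\lambda^2\sigma^2}{2}U_t^2\big)$, whence $\bE[M_t(\lambda)\mid\cG_{t-1}]\le M_{t-1}(\lambda)$.

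Next I would integrate $M_t(\lambda)$ against a centered Gaussian prior $\mu = \cN(0,\sigma^{-2})$ on $\lambda$, setting $\bar M_t = \int_{\bR} M_t(\lambda)\,d\mu(\lambda)$. By nonnegativity and Tonelli's theorem the conditional expectation commutes with the integral, so $\bar M_t$ is again a nonnegative supermartingale with $\bar M_0 = 1$. Completing the square in the Gaussian integral gives the closed form
\[
    \bar M_t = \frac{1}{\sqrt{1+V_t}}\exp\Big(\frac{S_t^2}{2\sigma^2(1+V_t)}\Big),
\]
and the choice of prior variance $\sigma^{-2}$ is precisely what produces the $1+V_t$ normalization appearing in the statement (in the general computation this corresponds to mixing-variance parameter $\rho=1$).

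Finally I would apply Ville's maximal inequality for nonnegative supermartingales, namely $\bP\big(\sup_{t}\bar M_t \ge 1/\delta\big)\le \delta\,\bE[\bar M_0] = \delta$, to conclude that with probability at least $1-\delta$ one has $\bar M_t < 1/\delta$ for all $t$ simultaneously. Taking logarithms and rearranging yields $S_t^2 < \sigma^2(1+V_t)\big(2\log(1/\delta)+\log(1+V_t)\big)$, hence the (stronger, two-sided) bound on $|S_t|$, which in particular gives the one-sided strict inequality of the statement after dividing through by $1+V_t$.

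The genuine content is entirely the mixture computation that pins down the $1+V_t$ normalization; the remaining points are routine bookkeeping. The main things to get right are the predictability/indexing conventions, so that $U_t$ can legitimately be pulled out of the conditional expectation at step $t$, the Fubini/Tonelli interchange justified by nonnegativity, and the applicability of the maximal inequality on the full index set $t\in\bN$ (the base case $t=0$ holds trivially since both sides reduce to $0 < \sigma\sqrt{2\log(1/\delta)}$ for $\delta<1$). I expect no serious obstacle beyond these verifications.
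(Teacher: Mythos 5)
Your proposal is correct and follows essentially the same route as the paper: the same supermartingale $M_t(\lambda)$, the same Gaussian mixture prior $\cN(0,\sigma^{-2})$, and the same closed form for $\bar M_t$ after completing the square. The only (cosmetic) difference is the final step, where you invoke Ville's maximal inequality to get the bound uniformly over $t$ in one stroke, whereas the paper applies a fixed-$t$ Chernoff/Markov argument and defers the uniformity over $t$ to the stopping-time construction of \citet{abbasi2011improved} --- these are equivalent, and your version is the more self-contained of the two.
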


The proof relies on the 
 method of mixture, widely used in the bandit literature \citep{abbasi2011improved, faury2020improved, zhou2024efficientoptimalcovarianceadaptivealgorithms}.

\begin{proof}
    Let $\delta>0$, $\sigma>0$. Let $(\cG_t)$ be a filtration and $(Z_t)$ be a $\cG_t$-adapted martingale with $\bE[Z_1]=0$ and so that for all $t\in\bN$,  $Z_{t+1}$ is $\sigma^2$-sub-Gaussian conditionally to $\cG_t$. Let $(U_t)$ be a $\cG_t$-predictable process.

    Let $t\in\bN^*$, a first direct result is that, $U_{t}Z_{t}$ is $(\sigma U_{t})^2$-sub-Gaussian conditionally to $\cG_{t-1}$. Let $\lambda\in\bR$. Then,
    \begin{align}
        \label{eq:superMartStep}
        \bE\Big[\exp\Big(\lambda U_{t}Z_{t}-\frac{\lambda^2}{2}(\sigma U_{t})^2\Big) | \cG_{t-1}\Big]\leq 1\,.
    \end{align}
    We define
    \begin{align*}
        M_t(\lambda) = \exp\Bigg(\lambda \sum_{s=1}^tU_{s}Z_{s}-\frac{\lambda^2}{2}\sum_{s=1}^t(\sigma U_{s})^2\Bigg)\,
    \end{align*}
    with $M_0(\lambda)=1$.
    From \cref{eq:superMartStep},
    \begin{align*}
        \forall t\in\bN,\hspace{0.5cm} \bE[M_t(\lambda)|\cG_t] 
        &= \bE\Bigg[\exp\Bigg(\lambda \sum_{s=1}^tU_{s}Z_{s}-\frac{\lambda^2}{2}\sum_{s=1}^t(\sigma U_{s})^2\Bigg)\Bigg|\cG_{t-1}\Bigg]\\
        &= M_{t-1
        }(\lambda)\ \bE\Bigg[\exp\Bigg(\lambda U_{t}Z_{t}-\frac{\lambda^2}{2}(\sigma U_{t})^2\Bigg)\Bigg|\cG_{t-1}\Bigg]\\
        &\leq M_{t-1}(\lambda)\,.
    \end{align*}
    Then, $(M_t(\lambda))_t$ is a $\cG_t$-supermartingale, with $\bE[M_t(\lambda)]\leq 1$\,. 

    We now consider $\lambda\sim \cN(0, 1/\sigma^2)$, independent from all the other distributions, then we can define
    {\allowdisplaybreaks
    \begin{align*}
    \allowdisplaybreaks
        \bar M_{t} 
        &= \bE_{\lambda\sim\cN(0, 1/\sigma^2)}[M_t(\lambda)]\\
        &= \frac{\sigma}{\sqrt{2\pi}}\int_{\bR} \exp\Big(-\frac{(\sigma x)^2}{2}\Big) \exp\Big(x \sum_{s=1}^tU_{s}Z_{s}-\frac{x^2}{2}\sum_{s=1}^t(\sigma U_{s})^2\Big) dx\\
        &= \frac{\sigma}{\sqrt{2\pi}}\int_{\bR} \exp\Big(-\frac{(\sigma x)^2(1+\sum_{s=1}^tU_{s}^2)}{2} + x \sum_{s=1}^tU_{s}Z_{s}\Big) dx\\
        &= \frac{\sigma}{\sqrt{2\pi}}\int_{\bR} \exp\Bigg(-\frac{\sigma^2(1+\sum_{s=1}^tU_s^2)}{2}\Big(x^2-2x\frac{\sum_{s=1}^tU_sZ_s}{\sigma^2(1+\sum_{s=1}^tU_s^2)}\Big)\Bigg) dx\\
        &= \frac{\sigma}{\sqrt{2\pi}}\int_{\bR} \exp\Bigg(-\frac{\sigma^2(1+\sum_{s=1}^tU_s^2)}{2}\Big(x-\frac{\sum_{s=1}^tU_sZ_s}{\sigma^2(1+\sum_{s=1}^tU_s^2)}\Big)^2+\frac{(\sum_{s=1}^tU_sZ_s)^2}{2\sigma^2(1+\sum_{s=1}^tU_s^2)}\Bigg) dx\\
        &= \exp\Bigg(\frac{(\sum_{s=1}^tU_sZ_s)^2}{2\sigma^2(1+\sum_{s=1}^tU_s^2)}\Bigg)\frac{\sigma}{\sqrt{2\pi}}\ \frac{\sqrt{2\pi}}{\sigma\sqrt{1+\sum_{s=1}^tU_s^2}}\frac{\sigma\sqrt{1+\sum_{s=1}^tU_s^2}}{\sqrt{2\pi}}\\
        &\hspace{4cm}\int_{\bR} \exp\Bigg(-\frac{\sigma^2(1+\sum_{s=1}^tU_s^2)}{2}\Big(x-\frac{\sum_{s=1}^tU_sZ_s}{\sigma^2(1+\sum_{s=1}^tU_s^2)}\Big)^2\Bigg) dx\\
        &=\exp\Bigg(\frac{(\sum_{s=1}^tU_sZ_s)^2}{2\sigma^2(1+\sum_{s=1}^tU_s^2)}\Bigg)\frac{1}{\sqrt{1+\sum_{s=1}^tU_s^2}}\bE_{\lambda\sim\cN\big(\frac{\sum_{s=1}^tU_sZ_s}{\sigma^2(1+\sum_{s=1}^tU_s^2)},\ \frac{1}{\sigma^2(1+\sum_{s=1}^tU_t^2)}\big)}\Big[1\Big]\\
        &=\frac{1}{\sqrt{1+\sum_{s=1}^tU_s^2}}\exp\Bigg(\frac{(\sum_{s=1}^tU_sZ_s)^2}{2\sigma^2(1+\sum_{s=1}^tU_s^2)}\Bigg)\\
        \bar M_t &=\exp\Bigg(\frac{(\sum_{s=1}^tU_sZ_s)^2}{2\sigma^2(1+\sum_{s=1}^tU_s^2)} - \frac{1}{2}\log\Big(1+\sum_{s=1}^tU_s^2\Big)\Bigg)
    \end{align*}
    }
    Besides, 
    \begin{align*}
        \bE\Big[\bar M_t \Big| \cG_{t-1}\Big] 
        &= \bE\Big[\bE_{\lambda\sim \cN(0, 1/\sigma^2)}[M_t(\lambda)]\Big|\cG_{t-1}\Big]\\
        &= \bE_{\lambda\sim \cN(0, 1/\sigma^2)}\Big[\bE[M_t(\lambda)|\cG_{t-1}]\Big]\\
        &\leq \bE_{\lambda\sim \cN(0, 1/\sigma^2)}\Big[M_{t-1}(\lambda)\Big]\\
        &= \bar M_{t-1}\,.
    \end{align*}
    So $(\bar M_{t})_t$ is also a supermartingale, which yield that 
    \begin{align*}
        \bE[\bar M_{t}]\leq \bE[\bar M_0] = 1\,.
    \end{align*}
    Let $u_t>0$\,. Now, using Chernoff's method,
    \begin{align*}
        \bP\Bigg(\frac{\sum_{s=1}^tU_sZ_s}{1+\sum_{s=1}^tU_s^2}\geq u_t\Bigg)
        &\leq \bP\Bigg(\exp\Big(\frac{(\sum_{s=1}^tU_sZ_s)^2}{2\sigma^2(1+\sum_{s=1}^tU_s^2)}-\frac{u_t^2}{2\sigma^2}(1+\sum_{s=1}^tU_s^2)\Big)\geq 1\Bigg) \\
        &\leq \bE\Bigg[\exp\Big(\frac{(\sum_{s=1}^tU_sZ_s)^2}{2\sigma^2(1+\sum_{s=1}^tU_s^2)}-\frac{u_t^2}{2\sigma^2}(1+\sum_{s=1}^tU_s^2)\Big)\Bigg]\\
        &\leq \bE\Bigg[\bar M_t\exp\Big(\frac{1}{2}\log(1+\sum_{s=1}^tU_s^2)-\frac{u_t^2}{2\sigma^2}(1+\sum_{s=1}^tU_s^2)\Big)\Bigg]\,.
        \end{align*}
Choosing $u_t= \frac{\sigma}{\sqrt{1+\sum_{s=1}^t}U_s^2}\sqrt{2\log(1/\delta)+\log(1+\sum_{s=1}^tU_s^2)}$\,,
\begin{align*}
    \bP\Bigg(\frac{\sum_{s=1}^tU_sZ_s}{1+\sum_{s=1}^tU_s^2}\geq u_t\Bigg)
    &\leq \bE\Big[\delta \bar M_t\Big]\\
    &\leq\delta\,.
\end{align*}

The bound for all $t$ is based on the stopping time construction from \cite{abbasi2011improved}.

\end{proof}

\newpage
\section{Proof for the high-probability bound of Double-Greedy (Algorithm~\ref{alg:DG}, \DG from \citet{Buchbinder2012})}
\label{app:highProbaDG}

\highProbaDG*

\begin{proof}
    Let $1>\delta>0$ and $T\in\bN^*$ such that $T>2\log(1/\delta)$\,. Then $(f(S_i))_{i\in[T]}$ is a sequence of $T$ i.i.d. random variables, bounded in $[0, f(\OPT)]$.

    Let $\frac{1}{2}>u>0$. Then
    \begin{align*}
        \bP\Bigg(\max_{i\in[T]}f(S_i) < \Big(\frac{1}{2}-u\Big) f(\OPT)\Bigg) 
        &= \bP\Bigg(\forall i\in[T],\ f(S_i) < \Big(\frac{1}{2}-u\Big) f(\OPT)\Bigg)\\
        &= \prod_{i=1}^T\bP\Bigg(f(S_i) < \Big(\frac{1}{2}-u\Big)f(\OPT)\Bigg)\\
        &\leq \bP\Bigg(\Big(\frac{1}{2}-u\Big)f(\OPT) + f(\OPT)-f(S_1)> f(\OPT)\Bigg)^T\\
        &\leq \frac{1}{f(\OPT)^T}\bE\Bigg[\Big(\frac{1}{2}-u\Big)f(\OPT)+f(\OPT)-f(S_1)\Bigg]^T \leftarrow \text{Markov}\\
        &\leq \frac{1}{f(\OPT)^T}\Bigg[\Big(\frac{1}{2}-u\Big)f(\OPT)+\frac{1}{2}f(\OPT)\Bigg]^T \leftarrow\text{Theorem~\ref{th:DG}}\\
        &=(1-u)^T\\
        &\leq \exp(-Tu)\,.
    \end{align*}
    Therefore, taking $u=\frac{\log(1/\delta)}{T}$, we have the result
    \begin{align*}
        \bP\Bigg(\max_{i\in[T]}f(S_i) < \Big(\frac{1}{2}-u\Big) f(\OPT)\Bigg) \leq \delta\,.
    \end{align*}
\end{proof}

\newpage
\section{Proofs for the analysis of Double-Greedy - Explore-Then-Commit (\DGETC, ours)}

\subsection{Proof for the high-probability exploitation regret}
\label{app:highProbaExploitation}

\concentrationExploration*

\begin{proof}
We remind Eq.~\eqref{eq:maxRT},
\begin{align}
    \textstyle\sum_{t=\tau+1}^Tr_t\leq \textstyle\sum_{i\in[d]}\max\big\{R_{T, i}^+,\ R_{T, i}^-\big\}\,, \tag{\ref{eq:maxRT}}
\end{align}
where $\begin{aligned}[t]
    R_{T, i}^+ &= \textstyle\sum_{t=\tau+1}^T \Big[(1-K_{i, t})\alpha_{i, t} -\frac{1}{2}\big(K_{i, t}\alpha_{i, t}+(1-K_{i, t})\beta_{i, t}\big)\Big]\,,\\
    R_{T, i}^- &= \textstyle\sum_{t=\tau+1}^T \Big[K_{i, t}\beta_{i, t} -\frac{1}{2}\big(K_{i, t}\alpha_{i, t}+(1-K_{i, t})\beta_{i, t}\big)\Big]\,.
\end{aligned}$

For $i\in[d]$, we define $\bar\alpha_{i} = \bE\big[\alpha_{i, t}|(p_j)_{j < i}\big]$ and $\bar \beta_i = \bE\big[\beta_{i, t}|(p_j)_{j < i}\big]$\,, both quantities being constant for rounds $t> t_{i-1}$ (and thus, for $t> \tau$). Separating the different sources of randomness yields
\begin{align*}
    R_{T, i}^+ = \bar E_{T, i}^+ + \hat{E}_{T, i}^+ + L_{T, i}^+\,,\hspace{1cm} R_{T, i}^-= \bar E_{T, i}^- + \hat{E}_{T, i}^- + L_{T, i}^-\,,
\end{align*}
where we have
\begin{itemize}[nosep]
    \item errors coming from the deviation of $(\alpha_{i,t},\ \beta_{i,t})$ from $(\bar\alpha_i,\ \bar\beta_i)$\,:
    \begin{itemize}[nosep]
        \item[] $\bar E_{T, i}^+ = \sum_{t=\tau+1}^T\Big[(1-K_{i, t})(\alpha_{i, t}-\bar\alpha_i)-\frac{1}{2}\big(K_{i, t}(\alpha_{i, t}-\bar\alpha_i)+K_{i, t}^c(\beta_{i, t}-\bar\beta_i)\big)\Big]$,
        \item[] $\bar E_{T, i}^- = \sum_{t=\tau+1}^T\Big[K_{i, t}(\beta_{i, t}-\bar\beta_i)-\frac{1}{2}\big(K_{i, t}(\alpha_{i, t}-\bar\alpha_i)+K_{i, t}^c(\beta_{i, t}-\bar\beta_i)\big)\Big]$,
    \end{itemize}
    \item approximation errors for $(\hat{\alpha}_i, \hat{\beta}_i)$\,:
    \begin{itemize}[nosep]
        \item[] $\hat E_{T, i}^+ = \sum_{t=\tau+1}^T\Big[(1-K_{i, t})(\bar\alpha_{i, t}-\hat\alpha_i) -\frac{1}{2}\big(K_{i, t}(\bar\alpha_{i, t}-\hat\alpha_i)+(1-K_{i, t})(\bar\beta_{i, t}-\hat\beta_i)\big)\Big]$,
        \item[] $\hat E_{T, i}^- = \sum_{t=\tau+1}^T\Big[K_{i, t}(\bar\beta_{i, t}-\hat\beta_i) -\frac{1}{2}\big(K_{i, t}(\bar\alpha_{i, t}-\hat\alpha_i)+(1-K_{i, t})(\bar\beta_{i, t}-\hat\beta_i)\big)\Big]$,
    \end{itemize}
    \item the deviation of losses caused by the random variables $(K_{i,t})_{i,t}$\,:
    \begin{itemize}[nosep]
        \item[] $L_{T, i}^+=\sum_{t=\tau+1}^T\Big[(1-K_{i, t})\hat{\alpha}_{i} - \frac{1}{2}(K_{i, t}\hat \alpha_{i}+(1-K_{i, t})\hat \beta_{i})\Big]-(T-\tau)l_{i}^+$,
        \item[] $L_{T, i}^-=\sum_{t=\tau+1}^T\Big[(K_{i})\hat{\beta}_{i} - \frac{1}{2}(K_{i}, t\hat \alpha_{i}+(1-K_{i, t})\hat \beta_{i})\Big]-(T-\tau)l_i^-$,
    \end{itemize}
    \item the average loss criterion used in \UpdateExplo\,:
    \begin{itemize}[nosep]
        \item[] $l_{i}^+ = l^+(\hat{\alpha}_i,\ \hat{\beta}_i,\ p_i)=(1-p_{i})\hat \alpha_{i} - \frac{1}{2}(p_{i}\hat \alpha_{i} + (1-p_{i})\hat \alpha_{i})$,
    \item[] $l_{i}^- = l^-(\hat{\alpha}_i,\ \hat{\beta}_i,\ p_i)=p_{i}\hat \beta_{i} - \frac{1}{2}(p_{i}\hat \alpha_{i} + (1-p_{i})\hat \alpha_{i})$.
    \end{itemize}
\end{itemize}

We control these terms using the concentration lemmas in Appendix~\ref{app:concentration}. Let $\delta>0$, we define 
\begin{align*}
    \cG &= \Big\{\forall i,\ \bar E_{T, i}^+ \leq3c \sqrt{2(T-\tau)\log(dT/\delta)} \text{ and } \bar E_{T, i}^- \leq3c \sqrt{2(T-\tau)\log(dT/\delta)}\Big\}\,,\\
    \cH &=\left.
    \begin{cases}\forall i\in[d],\ \forall t> t_{i-1},\
        &|\bar \alpha_i - \hat{\alpha}_{i, t}|\leq \sqrt{2\sigma^2+c^2}\sqrt{2\frac{\log(dT/\delta)+\log(1+4\tau_{i, t})}{\tau_{i, t}+1}}\,;\\
    &|\bar \beta_i - \hat{\beta}_{i, t}| \leq \sqrt{2\sigma^2+c^2}\sqrt{2\frac{\log(dT/\delta)+\log(1+4\tau_{i, t})}{\tau_{i, t}+1}}
    \end{cases}\right\}\,,\\
    \cI &= \left.\begin{cases}
        \forall i\in[d],\ 
        &\hat E_{T, i}^+ \leq (T-\tau)\Big(1+\frac{\sqrt{2\log(dT/\delta)}}{\sqrt{T-\tau}}\Big)\max\Big(|\bar\alpha_i-\hat\alpha_{ i}|,|\bar\beta_i-\hat\beta_{i}|\Big)\\
        &\hat E_{T, i}^- \leq (T-\tau)\Big(1+\frac{\sqrt{2\log(dT/\delta)}}{\sqrt{T-\tau}}\Big)\max\Big(|\bar\alpha_i-\hat\alpha_{i}|,|\bar\beta_i-\hat\beta_{t}|\Big)
    \end{cases}
    \right\}\,,\\
    \cJ &= \left.\begin{cases}
        \forall i\in[d],\ &L_{T, i}^+\leq \frac{3c}{\sqrt{2}}\sqrt{(T-\tau)\log(dT/\delta)}\,,\\
        &L_{T, i}^-\leq \frac{3c}{\sqrt{2}}\sqrt{(T-\tau)\log(dT/\delta)}
    \end{cases}\right\}\,.
\end{align*}

Applying Lemma~\ref{lem:lemIidSubgaussian} and a union bound yields that $\bP(\cG^c\cup\cI^c\cup\cJ^c) \leq \frac{6\delta}{T}$. Likewise Lemma~\ref{lem:onlineConc} yields $\bP(\cH^c)\leq \frac{4\delta}{T}$.

Besides , $\cG\cap\cH\cap\cI\cap\cJ\subseteq\cE$ (calculations assuming $d\geq 2$, $\tau\leq T/2$, and $dT\geq\delta$).
\end{proof}

\subsection{Proof for the duration of the exploration phase}
\label{app:exploration}

The following lemma is a consequence of the definition of submodularity, but it is particularly useful when analyzing double-greedy approaches, as it limits the range of possible marginal gains to consider when adding/removing items.

\begin{llemma}
\label{lem:marginal_sum}
    Let $\cD$ be a finite set and $f$ be a submodular set-function. Let $A\subset B\subseteq\cD$ and an item $i\in(B\setminus A)$\,. Then, 
    \begin{align*}
        \Big(f(A\cup\{i\}) - f(A)\Big) + \Big(f(B\setminus\{i\}) - f(B)\Big) \geq 0\,.
    \end{align*}
\end{llemma}

We can now use this lemma to prove the following proposition.

\explorationSuff*

\begin{proof}
    We need to look for conditions for Eq.~\eqref{eq:pConditions} to be satisfied
\begin{align}
    p(\bar\beta_i-3\bar\alpha_i) \leq -\frac{g_i+\gamma_{T, \delta}}{\sqrt{\tau_i}}+(\beta_i-2\bar\alpha_i)\,, \hspace{1cm}
    p(3\bar\beta_i - \bar \alpha_i) \leq -\frac{g_i+\gamma_{T, \delta}}{\sqrt{\tau_i}}+\bar\beta_i\,,\tag{\ref{eq:pConditions}}
\end{align}
where $\gamma_{T, \delta}=3\sqrt{(2\sigma^2+c^2)(\log(dT/\delta)+\log(1+T))}$\,.

Considering the different configurations of $(\alpha,\ \beta)$ possible using Lemma~\ref{lem:marginal_sum}, gives $5$ zones with different sufficient conditions for the existence of a $p_i\in[0, 1]$ satisfying Eq.~\eqref{eq:pConditions}. They are summarized in Table~\ref{tab:exploThresholds}, and are upper-bounded by the DG-hardness defined in Definition~\ref{def:DGhardness}.

\begin{center}
\begin{minipage}{.58\textwidth}
\begin{table}[H]
    \centering
    \begin{tabular}{ccc}
        \toprule
        &Zone & Threshold of $\frac{\tau_i}{(g_{T, \delta}+\gamma_{T,\delta})^2}$ \\
        \midrule
        \circled{1}&$\bar \alpha_i\leq 0$,\ $\bar\beta_i>0$ &  $1/\bar\beta_i^2$\\
        \circled{2}&$0\leq \bar\alpha_i \leq \bar\beta_i/3$ & $1/(\bar \beta_i-2\bar\alpha_i)^2$\\
        \circled{3}&$ 0\leq \bar\beta_i/3 \leq \bar\alpha_i \leq 3\bar\beta_i $ & $(\bar\alpha_i+\bar\beta_i)^2/(\bar\beta_i-\bar\alpha_i)^4$\\
        \circled{4}&$ 0 \leq 3\bar\beta_i \leq \alpha_i$ & $1/(\bar\alpha_i-2\bar\beta_i)^2$\\
        \circled{5}&$ \bar\alpha_i>0$,\ $\bar\beta_i\leq 0$ & $1/\bar \alpha_i^2$ \\
        \bottomrule
    \end{tabular}
    \caption{Exploration thresholds for \UpdateExplo.}
    \label{tab:exploThresholds}
\end{table}
\end{minipage}
\hfill
\begin{minipage}{.4\textwidth}
\includegraphics[width = \textwidth]{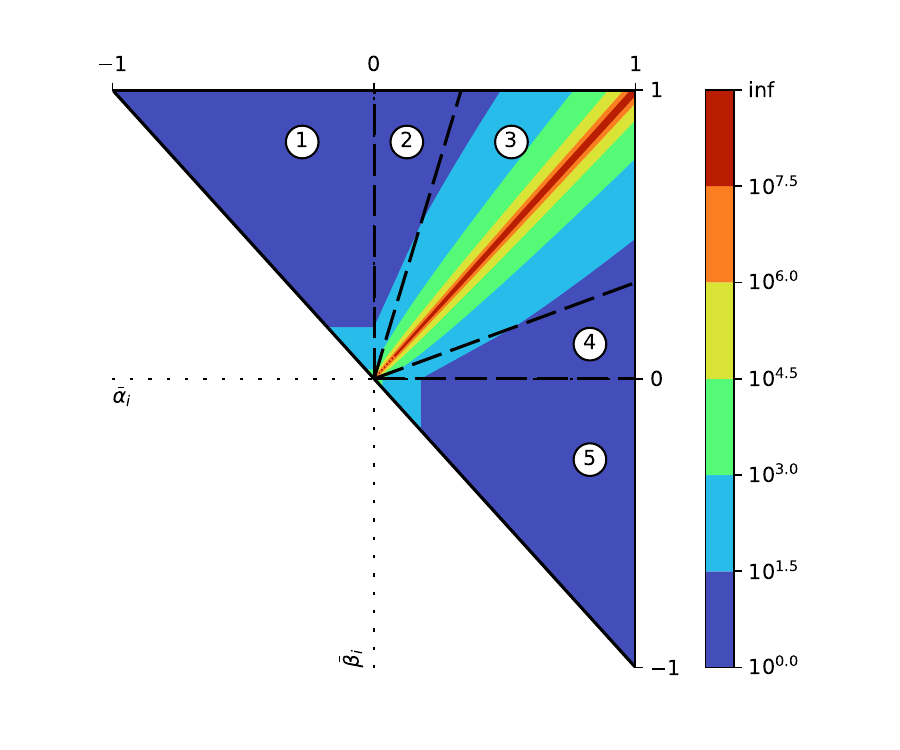}
\captionof{figure}{Exploration thresholds for Subroutine \UpdateExplo as a function of $\bar\alpha_i$ and $\bar\beta_i$ for $c=1$.}
\end{minipage}
\end{center}

\end{proof}

\end{document}